\documentclass{article}

\PassOptionsToPackage{svgnames}{xcolor}

\usepackage[preprint]{neurips_2026}

\usepackage[utf8]{inputenc} 
\usepackage[T1]{fontenc}    
\usepackage{hyperref}       
\usepackage{url}            
\usepackage{booktabs}       
\usepackage{amsfonts}       
\usepackage{microtype}      
\usepackage{xcolor}         

\usepackage{graphicx}

\usepackage{multirow}
\usepackage{threeparttable}
\usepackage{tabularx}

\usepackage{amsmath}
\usepackage{amssymb}
\usepackage{amsthm}
\usepackage{bbm}

\usepackage{soul}

\usepackage[capitalize,noabbrev]{cleveref}

\usepackage[textsize=tiny]{todonotes}
\usepackage[printonlyused]{acronym}

\newcolumntype{Y}{>{\centering\arraybackslash}X}

\definecolor{bloatcolor}{RGB}{255, 220, 220}
\newcommand{\bloat}[1]{\sethlcolor{bloatcolor}\hl{#1}}
\newcommand{\bridge}[1]{\textbf{\textcolor{blue!60!black}{#1}}}

\newcommand{\paradox}{Attention–Markov Incompatibility}

\usepackage{amsmath,amsfonts,bm}

\def\eqref#1{eq.~\ref{#1}}

\def\1{\bm{1}}

\DeclareMathAlphabet{\mathsfit}{\encodingdefault}{\sfdefault}{m}{sl}
\SetMathAlphabet{\mathsfit}{bold}{\encodingdefault}{\sfdefault}{bx}{n}

\def\gL{{\mathcal{L}}}

\def\gX{{\mathcal{X}}}

\usepackage{amsthm}
\usepackage{amssymb}
\newtheorem{theorem}{Theorem}[section]

\theoremstyle{definition}

\newtheorem{proposition}[theorem]{Proposition}

\theoremstyle{remark}

\newcommand*{\prob}[1]{\mathbb{P}}

\newcommand{\E}{\mathbb{E}}

\usepackage{empheq}
\usepackage{fancybox}
\usepackage{tcolorbox}

\theoremstyle{definition}

\acrodef{CoT}{Chain-of-Thought}
\acrodef{IB}{Information Bottleneck}
\acrodef{CIB}{Conditional Information Bottleneck}
\acrodef{RL}{Reinforcement Learning}
\acrodef{LLM}{Large Language Model}
\acrodef{LLMs}{Large Language Models}
\acrodef{MI}{Mutual Information}
\acrodef{SOTA}{state-of-the-art}
\acrodef{PRM}{Process Reward Model}

\definecolor{dlercib15}{HTML}{1F77B4}   
\definecolor{dlercib7}{HTML}{FF7F0E}    
\definecolor{deepcib15}{HTML}{2CA02C}   
\definecolor{deepcib7}{HTML}{D62728}    

\title{Reasoning as Compression: Unifying Budget Forcing via the Conditional Information Bottleneck}

\author{%
  Fabio Valerio Massoli\thanks{Corresponding author: \texttt{fmassoli@qti.qualcomm.com}} \\
  Qualcomm AI Research$^\dagger$ \\
  \And
  Andrey Kuzmin \\
  Qualcomm AI Research$^\dagger$ \\
  \And
  Arash Behboodi \\
  Qualcomm AI Research\thanks{Qualcomm AI Research is an initiative of Qualcomm Technologies, Inc.}
}

\begin{document}

\maketitle

\begin{abstract}


\ac{CoT} prompting improves LLM accuracy on complex tasks but often increases token usage and inference cost. Existing ``Budget Forcing'' methods reduce cost via fine-tuning with heuristic length penalties, suppressing both essential reasoning and redundant filler. We recast efficient reasoning as a lossy compression problem under the \ac{IB} principle, and identify a key theoretical gap when applying naive \ac{IB} to transformers: attention violates the Markov property between prompt, reasoning trace, and response. To resolve this issue, we model \ac{CoT} generation under the \ac{CIB} principle, where the reasoning trace $Z$ acts as a computational bridge that contains only the information about the response $Y$ that is not directly accessible from the prompt $X$. This yields a general Reinforcement Learning objective: maximize task reward while compressing completions under a prior over reasoning traces, subsuming common heuristics (e.g., length penalties) as special cases (e.g., uniform priors). In contrast to naive token-counting approaches, we introduce a semantic prior that measures token cost by surprisal under a language model. 
Crucially, the prior is queried only for token-level log-probabilities, adding negligible overhead to the training loop.
Empirically, our \ac{CIB} objective prunes reasoning redundancy while preserving fluency and logic, improving accuracy at moderate compression and enabling aggressive compression with minimal accuracy drop. These gains generalize across model families and task domains, confirming \ac{CIB} as a domain-agnostic CoT compression framework.

\end{abstract}

\section{Introduction}\label{sec:intro}

\begin{figure}[t]
  \centering
    \includegraphics[width=\linewidth]{figures/beta_minimality_merged.pdf}
    \caption{
    \textbf{Left: Minimality Reward vs. Completion Length.} We observe a consistent negative correlation between the completion length and the minimality reward. The shaded blue region shows the $\pm1\sigma$ band representing the spread of the information cost for the token chosen within CoTs with similar length. 
  \textbf{Right: Accuracy vs. CoT Pareto Frontier.} 
  The $\beta$ parameter confers fine-grained control over the accuracy-compression trade-off. 7B prior ({\color{brown}$\blacksquare$}) yields stronger compression than a 1.5B prior ({\color{blue}$\bullet$}). Baselines: DLER~\citep{liu2025dler} ({\color{red}$\bigstar$}), L3L1-EXACT~\citep{aggarwal2025l1} ({\color{Magenta}\rotatebox{45}{$\blacksquare$}}), and our L1-Exact$^\ddagger$ model ({\color{Green}$\blacktriangledown$}).
    }
    \label{fig:side_side_1}
\end{figure}


Chain-of-Thought (CoT) prompting \citep{wei2022chain} is the primary mechanism for unlocking reasoning in \ac{LLMs}, allowing models to allocate test-time computation for complex tasks. However, this gain incurs significant costs: reasoning chains are often excessively verbose, increasing latency and compute usage. Consequently, ``Budget Forcing''---constraining models to yield correct answers within a restricted token budget---has emerged as a critical frontier in efficient inference. Current approaches relying on naive length penalties or strict training-time length constraints are suboptimal. Whether penalizing output length or enforcing a hard token limit, these methods impose a uniform cost on every token, implicitly assuming all tokens contribute equally to the solution. This uniform token penalty---effectively a ``flat tax'' on 
computation---ignores the distinction between essential reasoning steps and redundant fillers. Optimizing under such a metric is brittle: models are incentivized to delete tokens regardless of semantic relevance, discarding crucial intermediate logic to satisfy the budget. This makes the accuracy--compute trade-off difficult to tune, as a single weight (or limit) may over-penalize hard prompts while under-penalizing redundancy in easy ones. \\
In this work, we reframe efficient reasoning not as token minimization, but as \emph{lossy compression}. We propose a unified framework based on the Information Bottleneck (IB) principle \citep{tishby1999information}, positing that an ideal reasoning chain is the minimal sufficient statistic of the prompt required to predict the answer. We identify that standard \ac{IB} \citep{tishby1999information} cannot be naively applied to transformers due to a theoretical inconsistency we term the ``\emph{\paradox}'': the attention mechanism grants the decoder direct access to the prompt, violating the Markov chain assumption ($Y \leftrightarrow X \leftrightarrow Z$) required by standard IB. 
We resolve the problem by modeling \ac{CoT} generation under the
Conditional Information Bottleneck (CIB) as \emph{Source Coding with Side Information}. As a result, a novel \ac{RL} objective naturally arises from the \ac{CIB} framework. Instead of a uniform length penalty, we assign a \emph{semantic cost} to each token based on its information content relative to a frozen base model. This formulation aligns cost with information flow: the model is encouraged to ``pay'' for informative tokens that increase answer probability while suppressing redundancy. Empirically, this allows for precise navigation of the Pareto frontier, achieving a superior accuracy--compression trade-off compared to length-based baselines (Figure~\ref{fig:side_side_1}, right).\\
Our contributions are as follows:

\begin{itemize}
\item We identify the limitations of length-based budget forcing, showing that uniform penalties and hard limits conflate essential reasoning with redundancy.
\item We propose a theoretical framework resolving the ``\paradox'' via the \ac{CIB}, yielding a semantic token cost based on relevance rather than length.
\item We demonstrate that this formulation compresses reasoning traces while achieving Pareto optimal accuracy-compression trade-off.
\end{itemize}

The remainder of the paper is organized as follows. Section~\ref{sec:related} reviews related work.  Section~\ref{sec:methodology} formalizes the ``\paradox", derives the \ac{CIB} objective, and presents our reward model. Section~\ref{sec:theory} proves that standard length-based penalties arise as a special case of \ac{CIB} under non-informative priors. Section~\ref{sec:exp_res} reports experimental results. Section~\ref{sec:conclusions} we report our conclusions and briefly discuss limitations.

\section{Related Work} \label{sec:related}
\paragraph{Budget Forcing and Efficient Reasoning.}
Recent studies suggest that optimal reasoning compute should scale with problem complexity \cite{zhang2025laws}, yet unconstrained models often exhibit excessive verbosity even on simple tasks \cite{muennighoff2025s1}. This has motivated ``Budget Forcing'' strategies spanning training and inference, including reward shaping with length costs \cite{aggarwal2025l1}, token-level advantages~\cite{he2026iapo}, and hard truncation~\cite{liu2025dler}. More granular approaches include difficulty-aware allocation \cite{cheng2025optimizing} and reference-guided budgeting \cite{wu2025lapo,li2025selfbudgeter,luo2025o1}, monotonic uncertainty reduction~\cite{wei2026infodensity}, sometimes tracking history \cite{huang2025hapo} or decomposing costs per-token \cite{jiang2025overthinking}. Inference-only methods steer generation via auxiliary predictors \cite{li2025budgetguidance,han2025tale} or employ early-exit decoding \cite{mao2025escot,wang2025eat}. Alternative paradigms replace verbose \ac{CoT} with concise drafting \cite{xu2025cod,renze2024ccot}, selective reasoning policies \cite{wang2025ton}, or trace compression via token pruning and skipping \cite{xia2025tokenskip,choi2025caccot,cui2025perplexity,cheng2024compressedcot}. \cite{wang2024tokeneconomies} further propose budget-aware evaluation metrics. Several more recent works consider budget forcing in the context of multi-turn reasoning tasks~\cite{jali2026not,sethi2026don, ma2026odar}. While effective, these methods largely rely on naive token counts as a cost proxy. In contrast, we ground budget forcing in information theory, penalizing tokens based on semantic surprisal rather than raw length.

\paragraph{Information Theory in Large Language Models.} 
The \ac{IB} principle \cite{tishby1999information} was proposed as a framework for analyzing deep learning \cite{Shwartz-Ziv2017-IB}, followed by various discussions \cite{Saxe2018-IB}, applications in reasoning and robustness \cite{huang2025ibro}, and hallucination detection \cite{wang2024understanding}. However, these works differ from ours in two key respects. First, their objectives typically target generalization or explainability of deep learning rather than strict computational efficiency of reasoning models. Second, they apply the standard \ac{IB} formulation, which assumes a Markov chain where the latent representation $Z$ mediates all information. Instead, we explicitly take into account the structure of transformer architectures,  where the attention mechanism grants the decoder direct access to the prompt, $X$, creating a collider structure $(X, Z) \to Y$ which breaks the aforementioned Markov property.  To the best of our knowledge, this work is the first to unify ``Budget Forcing'' and Information Theory under a Conditional Information Bottleneck framework.

\section{Methodology} \label{sec:methodology}
We now formalize efficient reasoning as an optimization problem within the CIB framework. We report here the most relevant equations and we refer the reader to Appendix~\ref{app:CIB} for the full derivation. In what follows, we refer to $X$, $Z$, and $Y$, as the prompt, \ac{CoT}, and ground truth answer, respectively.
\paragraph{The \paradox.} 
    The standard Information Bottleneck (IB) principle \citep{tishby1999information} seeks a representation $Z$ that maximally compresses the input $X$ while preserving information about the target $Y$. Formally, it minimizes the Lagrangian:\begin{equation}\mathcal{L}_{\text{IB}} = I(X; Z) - \mu I(Y; Z)\end{equation} over $P(Z|X)$ where $\mu$ controls the trade-off between compression (minimizing mutual information $I(X; Z)$) and prediction (maximizing $I(Y; Z)$). Crucially, the standard IB assumes the Markov chain $Y \leftrightarrow X \leftrightarrow Z$, implying that $Z$ is the sole channel through which information flows from $X$ to $Y$. However, this assumption is fundamentally violated in transformer-based \ac{LLM}s. Due to the causal attention mechanism, the decoder predicting $Y$ attends to \emph{both} the prompt $X$ and the generated chain $Z$. This forms a collider structure: $(X, Z) \to Y$. We term this inconsistency the \emph{\paradox}. 
Under the standard IB objective, maximizing $I(Y;Z)$ can be inefficient as it ignores that the model has access to the query $X$ during the answer generation. This can lead to keeping redundant information about the query $X$. It is important to note that the conditional probability $P(Y|X)$ of the answer given the query is unknown, and exactly what we want to \textit{simulate} using the intermediate reasoning trace $Z$.

\paragraph{Conditional Information Bottleneck for Reasoning.}
To resolve the \paradox, we propose grounding ``Budget Forcing'' in the Conditional Information Bottleneck (CIB). We view the prompt $X$ as \emph{side information} that is always available to the answer generator. We require $Z$ to encode only the \emph{additional} information necessary to predict $Y$ given $X$. The objective becomes:\begin{equation}\mathcal{L}_{\text{CIB}} = I(X; Z) - \mu I(Y; Z|X)\end{equation} Minimizing $I(X; Z)$ (or a related upper bound on the rate) while maximizing the conditional predictive power $I(Y; Z | X)$ ensures that the chain $Z$ is penalized for redundancy with $X$ but rewarded for explaining $Y$. We use an LLM $\pi_\theta(\cdot|\cdot)$ to re-parameterize the optimization problem. 

\subsection{Problem Formulation}
We consider a reasoning task defined by a dataset distribution $P_{\mathcal{D}}(X, Y)$, where $X$ is the prompt and $Y$ the ground truth answer. We aim to learn a stochastic policy $\pi_\theta(Z \mid X)$ that generates a \ac{CoT} $Z$ to bridge the gap between $X$ and $Y$, while $\pi_\theta(Y \mid X, Z)$ generates the correct answer. \\
Our goal is to optimize the policy $\pi_\theta$ to maximize the \textbf{Sufficiency} of $Z$ for predicting $Y$, while minimizing the \textbf{Minimality} (information cost) of $Z$ relative to the side information $X$. This is formalized by the \ac{CIB} objective: $\min_{\theta} \mathcal{L}_{\text{CIB}}(\theta) = \min_{\theta}  I(X; Z) - \mu I(Z; Y \mid X)$,
where $\mu \ge 0$ controls the rate-distortion trade-off. To derive our final reward function, we rewrite the previous objective as a maximization problem, rather than a minimization one. Therefore, our objective becomes:
\begin{align}
    \max_{\theta} \mathcal{L}_{\text{CIB}}(\theta) = \max_{\theta} \underbrace{I(Z; Y \mid X)}_{\text{Sufficiency}} - \beta  \underbrace{I(X; Z)}_{\text{Minimality}} 
    \label{eq:cib_objective_2}
\end{align}
where $\beta$ gives direct control on the trade-off between accuracy and compression level (Figure~\ref{fig:side_side_1}, right). We refer the reader to Appendix~\ref{app:CIB} for the detailed discussion on the derivation. 
\paragraph{Deriving the Sufficiency Term (Accuracy Reward).}
We aim to maximize the conditional \ac{MI} $I(Z; Y \mid X)$. We can write it as a function of the policy $\pi_\theta(y|x,z), \pi_\theta(z|x)$ as: 
  \begin{align*}
       I(Y;Z|X)  & = \sum_{x,y,z} P(x,y) P(z|x,y)\log\frac{\pi_\theta(y|x,z)}{P(y|x)} \\
         & = \sum_{x,y,z} P(x,y) \pi_\theta(z|x) \frac{\pi_\theta(y|x,z)}{P(y|x)} \log\frac{\pi_\theta(y|x,z)}{P(y|x)} \\
         & \geq \sum_{x,y,z} P(x,y) \pi_\theta(z|x) \log\frac{\pi_\theta(y|x,z)}{P(y|x)}, 
  \end{align*}
  where we used the inequality $x\log x \geq \log x$ in the last step. 
  Note that the mutual information $I(Z; Y \mid X)$ can be decomposed as  $H(Y \mid X) - H(Y \mid X, Z)$. The first term $H(Y \mid X)$ represents the inherent difficulty of the dataset and is constant with respect to $\theta$. Thus, maximizing sufficiency is equivalent to minimizing the conditional entropy $H(Y \mid X, Z)$.
  We can  maximize the lower bound on it and approximate it further using the query-answer samples $(x_i,y_i)$. The first term of the optimization problem can then  be approximated as: $\sum_{i=1}^m \E_{Z\sim \pi_\theta(Z|x_i)} [\log \pi_\theta(y_i|x_i,Z)]$,
  where $m$ is the number of samples.
In many cases, like RLVR, a verifier $Q_\rho(y_i|x_i,z)$ is used to score the answer. Therefore, we can also optimize the following variational lower bound: $\sum_{i=1}^m \E_{Z\sim \pi_\theta(Z|x_i)} [\log Q_\rho(y_i|x_i,Z)]$.
See Appendix \ref{app:CIB} for the details of our derivation. In our experiments, we choose $Q_\rho(Y|X,Z)$ such that it gives a reward of 1 (0) for correct (wrong) answers.

\paragraph{From Log-Verifier to a Binary Accuracy Reward.}
Our variational surrogate for sufficiency uses a verifier score $\log Q_\rho(y_i\mid x_i, z_i)$.
In our setting the verifier is deterministic, returning
$Q_\rho(y\mid x,z)\in\{0,1\}$ (1 if the extracted answer is correct, else 0),
so the log-score is ill-defined for incorrect answers.
We therefore use the $\varepsilon$-smoothed verifier $\widetilde Q_\rho(y\mid x,z)
:= \varepsilon + (1-\varepsilon)\,\mathbbm{1}\!\left(\widehat y(x,z)=y\right)$,
where $\varepsilon\in(0,1)$ and $\widehat y$ is the predicted answer. Then $\log \widetilde Q_\rho(y\mid x,z)
= \log \varepsilon - \log \varepsilon\, \mathbbm{1}\!\left(\widehat y(x,z)=y\right)$.
Since $\log\varepsilon$ is a constant w.r.t.\ $(\theta,z)$ and $-\log\varepsilon>0$,
maximizing $\mathbb{E}[\log \widetilde Q_\rho(y\mid x,z)]$
is \emph{equivalent} (up to an affine transformation) to maximizing
$\mathbb{E}[\mathbbm{1}(\widehat y(x,z)=y)]$.
Accordingly, we define the accuracy reward as: $r_{\mathrm{acc}}(x,y,z) := \mathbbm{1}\!\left(\widehat y(x,z)=y\right)
$,
which is a finite, stable surrogate for the log-verifier objective.
\begin{figure}[t]
    \centering
    \includegraphics[width=\linewidth]{figures/lengths_distribution_dler_all_tasks_one_row.pdf}
    \caption{\textbf{Lengths Distribution.} Compared to the baseline length distribution (\textcolor{blue}{blue curve}), the minimality term shifts the length distribution towards shorter completions (\textcolor{DarkGreen}{green curve}). The plotted distributions correspond to models with similar accuracy (see Table~\ref{tab:results}).}
    \label{fig:lengths_dist}
\end{figure}

\paragraph{Deriving the Minimality Term (Information Cost).}
We aim to minimize the \ac{MI} $I(X; Z)$ to penalize redundancy in the \ac{CoT}: $I(X; Z) = \mathbb{E}_{X, Z} \left[ \log \frac{\pi_\theta(Z \mid X)}{P(Z)} \right]$.\\
However, computing $P(Z)$ is not tractable. Therefore, we introduce an unconditional variational prior $Q_\phi( Z)$ (a distribution over $Z$ that does not observe $X$) to find a variational bound similar to \cite{Alemi2017-variationalIB}. 
\begin{align}
\begin{aligned}
    I(X; Z) = &\ \mathbb{E}_{X, Z} \left[ \log \frac{\pi_\theta(Z \mid X)}{P(Z)} \right] \ 
    =\ \mathbb{E}_{X, Z} \left[ \log \frac{\pi_\theta(Z \mid X) Q_\phi(Z)}{P(Z) Q_\phi(Z)} \right] \\ 
    =&\ \mathbb{E}_{X, Z} \left[ \log \frac{\pi_\theta(Z \mid X)}{Q_\phi(Z)} \right] - \underbrace{D_{KL}(P(Z) \parallel Q_\phi(Z))}_{\ge 0}
\end{aligned}
\end{align}
Dropping the non-negative KL term gives the upper bound: $I(X; Z) \le \mathbb{E}_{X,Z} \left[ -\log Q_\phi(Z) \right] - H(Z \mid X)$,
where $Z \sim \pi_\theta(\cdot|X)$. 
We instantiate $Q_\phi(Z)$ using a frozen, pre-trained base model (not an instruction-finetuned model), ensuring it captures the statistics of general language without task-specific conditioning. More details on the derivation can be found in Appendix~\ref{app:CIB}.\\
The first term, $\mathbb{E}_{X,Z}[-\log Q_\phi(Z)]$, represents the cross-entropy rate (or description length) of the chain under the prior. It corresponds to the expected value of the reasoning trace information cost: $C(Z) := \sum_{t=1}^{|Z|} -\log Q_\phi(z_t \mid z_{<t})$.\\
The second term, $-H(Z \mid X)$, corresponds to the negative entropy of the policy. In \ac{RL} algorithms like PPO, this term is naturally handled via an entropy regularization bonus to encourage exploration.

\paragraph{Reward Modeling.}
Combining the bounds, we aim to maximize the following objective:
\begin{align}
    \mathcal{L}_{\text{CIB}} = \mathbb{E}_{(X,Y)\sim P_\mathcal{D},Z \sim \pi_\theta}  \Big[
    \log \widetilde Q_\rho(Y|X,Z) + \beta \sum_{t=1}^{T} &\log Q_\phi(z_t \mid z_{<t}) \Big],
\end{align}
where the first term represents the accuracy score from the verifier, $\widetilde Q_\rho$, and $Q_\phi$ is chosen as prior distribution. 
We define our reward model as: $R(X,Y,Z)\;\coloneqq\;\ r_{\mathrm{acc}}(X,Y,Z) + \beta r_{\text{min}}(X,Z)$, where $r_{\mathrm{acc}}(X,Y,Z) := \mathbbm{1}\!\left(\widehat Y(X,Z)=Y\right)$ is the accuracy reward, taking a value of 1 if the predicted answer matches the ground truth $Y$, and 0 otherwise, and $r_{\text{min}}(X,Z) := \sum_{t=1}^{T} \log Q_\phi(z_t \mid z_{<t})$,is the cumulative surprisal (information cost) of the reasoning chain relative to the prior. 
Accuracy remains the primary objective, while $r_{\text{min}}$ acts as a semantic regularizer controlled by the coefficient $\beta$. 
This effectively assigns a semantic cost to every token: the cost $-\log Q_\phi$ penalizes low-probability (high-surprisal) tokens unless they contribute significantly to solving the task ($r_{\text{acc}}$). Tokens that are redundant or verbose increase the cumulative cost without improving accuracy, and are thus suppressed by the policy. 
\section{Theoretical Analysis: A Unified Framework}
\label{sec:theory}

A central motivation for this work is demonstrating that the \ac{CIB} serves as a general framework from which, e.g., length-based penalties naturally arise as a special case. As an example, we prove that length-constrained methods correspond to the \ac{CIB} rate term with non-informative priors. 


\begin{proposition}\label{prop:linear_penalty}
A standard length-based penalty (e.g., $g(Z) = \alpha f(|Z|)$) is equivalent to the \ac{CIB} objective under the assumption of a maximum entropy (uniform) prior, $Q$, over the vocabulary. 
\end{proposition}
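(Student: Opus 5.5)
Our approach is to plug the uniform prior directly into the minimality term $r_{\min}$ of the reward $R(X,Y,Z)=r_{\mathrm{acc}}+\beta r_{\min}$ derived in Section~\ref{sec:methodology}, and then read off the resulting penalty.

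\textbf{Step 1: compute the cost under the uniform prior.} Let $\mathcal{V}$ denote the vocabulary, with $|\mathcal{V}|=V$. Take the maximum-entropy prior $Q(z_t\mid z_{<t})=1/V$ for every $t$ and every history; this is the unique maximizer of $H(Q(\cdot\mid z_{<t}))$ over distributions on $\mathcal{V}$. The information cost then becomes
\begin{equation*}
C(Z)=\sum_{t=1}^{|Z|}-\log Q(z_t\mid z_{<t})=|Z|\log V,
\qquad
r_{\min}(X,Z)=-|Z|\log V .
\end{equation*}
The CIB reward is therefore $R=r_{\mathrm{acc}}-\beta\log V\,|Z|$. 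This is exactly a linear length penalty $g(Z)=\alpha|Z|$ with $\alpha=\beta\log V$. Conversely, any $\alpha\ge 0$ is realized by the choice $\beta=\alpha/\log V$.

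\textbf{Step 2: handle the rest of the objective.} The remaining piece of the rate bound, $-H(Z\mid X)$, does not depend on $Q$. It is the same entropy bonus that any PPO-style length-penalty method already carries, so the two objectives coincide up to this shared term and additive constants. That proves the equivalence for $f$ equal to the identity.

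\textbf{Step 3: extend to general $f$.} This is where I expect the main obstacle. A uniform prior over a fixed vocabulary only ever produces costs linear in $|Z|$. To obtain $\alpha f(|Z|)$ for a general monotone $f$, I would make the prior non-informative about content but length-dependent. Let $Q(Z)=Q_L(|Z|)\,V^{-|Z|}$, i.e.\ tokens are uniform given the length. Then $-\log Q(Z)=|Z|\log V-\log Q_L(|Z|)$. Choosing
\begin{equation*}
Q_L(\ell)\propto \exp\bigl(-\tfrac{\alpha}{\beta}f(\ell)+\ell\log V\bigr),
\end{equation*}
and normalizing over $\ell$ (for instance, truncating at a maximum budget $L_{\max}$ so the normalizer is finite) gives $\beta\,[-\log Q(Z)]=\alpha f(|Z|)+\mathrm{const}$.

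Equivalently, one can view this as a uniform prior over the vocabulary augmented with an EOS token whose stopping probability is chosen per step. The prior then stays maximum-entropy over content, and all of its information is about length alone.

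Hard truncation (DLER-style) fits the same construction: take $f(\ell)=\infty\cdot\mathbbm{1}(\ell>L)$, so that $Q_L$ has support on $\ell\le L$. The additive constants from the normalizer do not affect the optimal policy. This concludes the argument.
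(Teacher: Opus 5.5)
Your Step 1 is essentially the paper's proof. A uniform $Q(z_t)=1/|V|$ makes every surprisal $\log|V|$, the cost becomes $|Z|\log|V|$, and one sets $\alpha=\beta\log|V|$.

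The paper stops there. So although the statement writes $g(Z)=\alpha f(|Z|)$, its proof only covers $f=\mathrm{id}$. Your Step 3 correctly diagnoses why: a per-token uniform prior can only ever produce a linear cost. Your remedy is a prior that is uniform over content given the length, with an arbitrary length law $Q_L$ (equivalently, uniform tokens with a step-dependent EOS hazard). This is just $Q(Z)\propto e^{-(\alpha/\beta)f(|Z|)}$ written out, i.e.\ the ``implicit prior $Q\propto e^{-g}$'' device the paper uses for Proposition~\ref{prop:lcpo}.

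What your route buys is the general-$f$ claim that the statement actually makes. The price is that the prior is no longer maximum-entropy over the vocabulary at every step. So for nonlinear $f$ you are proving a reformulated hypothesis; that is an imprecision in the proposition, not a flaw in your argument.

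The EOS view also repairs a point that both the paper and your Step 1 skip. The sum $\sum_Z |V|^{-|Z|}$ over all finite sequences diverges, whereas the step $D_{KL}(P(Z)\Vert Q)\ge 0$ in the rate bound needs $\sum_Z Q(Z)\le 1$. Making EOS one of the $|V|$ uniform tokens normalizes $Q$ and only changes the cost to $(|Z|+1)\log|V|$.

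Two small caveats.

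First, in Step 2, the rate bound contributes $+\beta H(Z\mid X)$ to the objective, which a plain reward $r_{\mathrm{acc}}-\alpha|Z|$ does not contain. So ``any PPO-style length-penalty method already carries it'' is an assumption, not a fact. The equivalence is exact at the level of the reward $R=r_{\mathrm{acc}}+\beta r_{\min}$, which is where the paper places it by delegating the entropy term to a PPO bonus. At the level of the full bound it holds only if the entropy coefficients match.

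Second, the hard-truncation remark is an idealization. An infinite penalty is a support constraint. DLER-style truncation instead zeroes the reward of over-length samples; that reward is bounded and depends on correctness, so it is not of the additive form $\alpha f(|Z|)$.
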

\begin{proof}
Let $|V|$ be the vocabulary size and consider the minimality term $\sum -\log Q(z_t)$. A Maximum Entropy prior implies a uniform distribution over the vocabulary $V$ (i.e., $Q(z_t) = \frac{1}{|V|}$ for all $z_t$). Thus, the surprisal of every token becomes constant: $c = \log |V|$. Then, the total information cost for a \ac{CoT}, $Z$, of length $T$ becomes: $-\mathrm{log}Q(Z)=-\sum_{t=1}^{T}\mathrm{log}(\frac{1}{|V|}) = T\cdot \mathrm{log}|V|$.\\
Substituting this into the \ac{CIB} objective, the penalty term becomes $\beta T \mathrm{log}|V|$. By setting $\alpha = \beta \mathrm{log}|V|$, we recover a linear length penalty. This proves that linear penalties implicitly assume that all tokens carry equal information content ($\mathrm{log}|V|$), ignoring the underlying semantics of the \ac{CoT}. 
\end{proof}
\begin{proposition}\label{prop:lcpo}
Target-length penalties, such as LCPO-Exact~\citep{aggarwal2025l1}, correspond to the \ac{CIB} objective with a Laplace prior.
\end{proposition}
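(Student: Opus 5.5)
The plan mirrors the proof of Proposition~\ref{prop:linear_penalty}, but puts the prior on the \emph{length} of the trace rather than on individual tokens. We first recall the form of the LCPO-Exact penalty from \citep{aggarwal2025l1}: for a target length $n_{\text{gold}}$ (given in the prompt), the reward is $r_{\mathrm{acc}} - \alpha\,|n_{\text{gold}} - |Z||$. We then factor the variational prior $Q_\phi(Z)$ used in the minimality term into a length component and a content component:
\[
Q(Z) = Q(T)\, Q(z_{1:T}\mid T), \qquad T = |Z| .
\]
For the content factor we reuse the maximum-entropy choice from Proposition~\ref{prop:linear_penalty}, namely $Q(z_{1:T}\mid T)=|V|^{-T}$. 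For the length factor we take a discrete Laplace distribution centred at the target,
\[
Q(T) = \frac{1}{K_\lambda}\, e^{-\lambda |T - n_{\text{gold}}|},
\]
where $K_\lambda$ is the normaliser over $T\in\mathbb{N}$. This normaliser is finite and does not depend on $Z$.

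Next we substitute this prior into the rate term $-\beta\log Q(Z)$ of the \ac{CIB} reward. This gives
\[
\beta\lambda |T-n_{\text{gold}}| + \beta T\log|V| + \beta\log K_\lambda .
\]
The last term is constant in $\theta$ and $Z$, so it can be dropped without changing the optimiser, or absorbed into the baseline of GRPO/PPO. Setting $\alpha=\beta\lambda$ recovers the absolute-deviation penalty exactly. The same reasoning reads the construction in reverse: the Laplace density is precisely the maximum-entropy distribution with a fixed expected absolute deviation from $n_{\text{gold}}$, which is why the ``Laplace'' label is the natural one.

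The main obstacle is the residual linear term $\beta T\log|V|$. With it, the implied objective is LCPO-Exact plus an extra linear length penalty, so the correspondence would not be exact. My first choice is to interpret ``uniform over the vocabulary'' as uniform \emph{conditional on length}, with the length factor carrying all the length dependence. Concretely, we absorb the $|V|^{-T}$ factor into the length prior by defining
\[
Q(T)\propto |V|^{T} e^{-\lambda|T-n_{\text{gold}}|}.
\]
This is normalisable only when $\lambda>\log|V|$. Equivalently, the induced marginal over traces is Laplace in $T$, and the penalty is exactly $\alpha|T-n_{\text{gold}}|$ with $\alpha=\beta\lambda$ up to a constant.

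If that restriction on $\lambda$ is undesirable, the fallback is to state the equivalence only up to a linear term. The linear term merely shifts the effective target on the side $T>n_{\text{gold}}$, and the resulting objective is still a Laplace (asymmetric) prior in $T$. Either way, we should state explicitly that, as in Proposition~\ref{prop:linear_penalty}, the prior depends on $X$ only through the target $n_{\text{gold}}$. This is a mild departure from the unconditional prior assumed in the minimality derivation, and we would flag it as such.
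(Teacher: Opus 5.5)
Your first-choice construction is correct and is essentially the paper's argument. The paper simply posits the implicit prior $Q(Z)\propto\exp(-g(Z))$ with $g(Z)=|n_{\text{gold}}-|Z||$; your length/content factorization arrives at the same $Q(Z)\propto e^{-\lambda|T-n_{\text{gold}}|}$, and adds the useful observation that this is a proper prior over (unbounded) traces only when $\lambda>\log|V|$, which the paper's unit-scale version violates.

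That condition costs nothing, because only the product $\alpha=\beta\lambda$ is identified, so you can always choose $\beta<\alpha/\log|V|$. You should therefore drop the fallback. It is also misdescribed: the extra $\beta T\log|V|$ term does not shift the target but makes the penalty asymmetric, with slopes of magnitude $\beta(\lambda+\log|V|)$ above and $\beta(\lambda-\log|V|)$ below $n_{\text{gold}}$, and no interior target at all when $\lambda\le\log|V|$. So it never reproduces LCPO-Exact.
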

\begin{proof} 
Any penalty function $g(Z)$ applied to the reward can be interpreted as an implicit prior $Q(Z) \propto \exp(-g(Z))$. 
LCPO-Exact penalizes deviation from a target length $n_{gold}$ via the term $g(Z) = |n_{gold} - n_y|$, where $n_y$ is the length of the generated \ac{CoT}. The corresponding implicit prior is: $Q_{\text{LCPO}}(Z) \propto e^{-|n_{gold} - n_y|}$.\\
This is a Laplace-like distribution over the sequence length $T$, centered at $n_{gold}$. Interpreting LCPO through this lens reveals a strong inductive bias: it posits that there exists a golden length for reasoning length, and any deviation (shorter or longer) is exponentially improbable. 
\end{proof}
Crucially, in both Propositions~\ref{prop:linear_penalty} and~\ref{prop:lcpo}, the implicit prior $Q(Z)$ depends solely on the sequence length, whereas our proposed \ac{CIB} method uses a language model prior defining a per-token cost.

\section{Experimental Results} \label{sec:exp_res}

\paragraph{Training and Evaluation Details}
We evaluate \ac{CIB} on three small reasoning model families, namely, DLER-\{1.5B, 7B\}~\citep{liu2025dler}, OpenReasoning-Nemotron-1.5B~\citep{ahmad2025opencodereasoning,moshkov2025aimo}, and DeepScaleR-1.5B~\citep{luo2025deepscalerdataset}. We focus on small-scale models (1.5B--7B parameters) as they represent the primary deployment target for efficient reasoning under latency and memory constraints.
Training uses GRPO~\citep{shao2024deepseekmath} with a group size of 16 and a Qwen2.5-Base-\{1.5B, 7B\} prior; the prior is used at training time only, adding no cost at inference. 
Our main evaluation spans five mathematical reasoning benchmarks: MATH500~\citep{lightman2023lets}, AIME24~\citep{aime2024}, AIME25~\citep{aime2025}, Minerva~\citep{lewkowycz2022solving}, and OlympiadBench~\citep{he2024olympiadbench}. Math provides unambiguous, deterministic targets, serving as a controlled evaluation setting. To demonstrate that \ac{CIB} generalizes beyond math, we report in Appendix~\ref{app:add_res} evaluations results on GPQA~\citep{rein2023gpqa}, MMLU-Pro~\citep{wang2024mmlu}, and LiveCodeBench~\citep{jain2024livecodebench}. We follow the evaluation protocol of~\citep{liu2025dler}, using vLLM as the inference engine (temperature 0.6, $\mathrm{top}_p = 0.95$, max tokens 32K, 16 generations per prompt) and report pass@1 accuracy. Further training and evaluation details are in Appendix~\ref{app:tr_details}.
We provide the list of assets used in the paper in Appendix~\ref{app:assets}.


\subsection{Main Results}

\begin{table}[!ht]
\centering
\setlength{\tabcolsep}{3pt}

\caption{\textbf{Accuracy ($\uparrow$) and average CoT length ($\downarrow$) across different benchmarks.} Bold indicates best performance within each model group. \textcolor{DarkGreen}{Green} highlights improved average accuracy or token count vs.\ the baseline; \textbf{\textcolor{DarkRed}{red}} marks significant degradation. CIB subscripts denote the compression setting ($\beta^-\!=\!5 \times 10^{-5}$, $\beta^+\!=\!1.5 \times 10^{-4}$) and the prior size (1.5B or 7B). Error estimates are reported in Tables~\ref{tab:results_math_1}--\ref{tab:res_beyond_math} in
Appendix~\ref{app:add_res}.\\
$^{*}$: Pass@1 with $n=16$ for consistency, the official HuggingFace page reports pass@1(Avg@64);\\ $^\dagger$: Our models; $^\ddagger$: Models we trained using the L1-EXACT length penalty~\citep{aggarwal2025l1}; $^{\S}$: Results from~\cite{wu2025lapo}.}
\label{tab:results}

\begin{tabularx}{\linewidth}{@{}l *{6}{Y}@{}}
\toprule
  \multicolumn{1}{c}{Model}
  & \multicolumn{1}{c}{MATH500}
  & \multicolumn{1}{c}{AIME24}
  & \multicolumn{1}{c}{AIME25}
  & \multicolumn{1}{c}{Minerva}
  & \multicolumn{1}{c}{Olympiad}
  & \multicolumn{1}{c}{Avg.} \\
  \multicolumn{1}{c}{}
  & \multicolumn{1}{c}{Acc./Tok.}
  & \multicolumn{1}{c}{Acc./Tok.}
  & \multicolumn{1}{c}{Acc./Tok.}
  & \multicolumn{1}{c}{Acc./Tok.}
  & \multicolumn{1}{c}{Acc./Tok.}
  & \multicolumn{1}{c}{Acc./Tok.} \\
\midrule
\multicolumn{7}{@{}l}{\textit{DeepScaleR-1.5B}} \\
\cmidrule{1-7}
  Baseline
    & $85.8$/$3190$
    & $38.2$/$9331$
    & $25.4$/$8907$
    & $19.9$/$6386$
    & $54.8$/$5922$
    & $44.8$/$6747$ \\
  L3L1-EXACT
    & $85.2$/$1932$
    & $\mathbf{\textcolor{DarkRed}{24.4}}$/$\mathbf{2637}$
    & $\mathbf{\textcolor{DarkRed}{20.2}}$/$\mathbf{2608}$
    & $19.8$/$2124$
    & $\mathbf{\textcolor{DarkRed}{48.8}}$/$\mathbf{2369}$
    & $\mathbf{\textcolor{DarkRed}{39.7}}$/$\mathbf{\textcolor{Green}{2334}}$ \\
  ThinkPrune-I2k$^{\S}$  
    & $85.5/1707$ 
    & $34.9/5095$
    & $-/-$
    & $-/-$
    & $54.7/3498$ 
    & $-/-$\\
  HAPO$^{\S}$            
    & $84.4/2370$ 
    & $31.4/7702 $
    & $-/-$ 
    & $-/- $
    & $51.4/4571$
    & $-/-$ \\
  LAPO-I$^{\S}$  
    & $86.3/2168 $
    & $38.1/5371$ 
    & $-/-$
    & $-/-$
    & $\mathbf{56.3}/4024$
    & $-/-$ \\    
  CIB$^\dagger_{\beta^-\!,\, 1.5B}$
    & $\mathbf{87.6}$/$2359$
    & $\mathbf{40.2}$/$7298$
    & $\mathbf{27.8}$/$6866$
    & $\mathbf{20.4}$/$4510$
    & $55.5$/$4472$
    & \textcolor{DarkGreen}{$\mathbf{46.3}$}/$5101$ \\
  CIB$^\dagger_{\beta^+\!,\, 1.5B}$
    & $87.2$/$2017$
    & $38.2$/$6937$
    & $27.6$/$6757$
    & $20.1$/$4147$
    & $55.3$/$4211$
    & $45.8$/$4814$ \\
  CIB$^\dagger_{7B}$
    & $86.1$/$\mathbf{1337}$
    & $37.5$/$5003$
    & $25.0$/$5389$
    & $19.7$/$\mathbf{2058}$
    & $55.0$/$3065$
    & $44.7$/\textcolor{DarkGreen}{$\mathbf{3370}$} \\
\midrule
\multicolumn{7}{@{}l}{\textit{Nemotron-1.5B}} \\
\cmidrule{1-7}
  Baseline$^{*}$
    & $85.8$/$8509$
    & $43.1$/$22931$
    & $\mathbf{35.1}$/$24086$
    & $11.0$/$13011$
    & $43.8$/$16270$
    & $43.8$/$16961$ \\
  L1-Exact$^\ddagger$
    & $85.8$/$8807$
    & $40.2$/$22079$
    & $30.8$/$23243$
    & $11.4$/$12925$
    & $44.0$/$15316$
    & $42.4$/$16474$ \\
  CIB$^\dagger_{\beta^-\!,\, 1.5B}$
    & $\mathbf{86.8}$/$7536$
    & $\mathbf{45.3}$/$21962$
    & $34.1$/$22980$
    & $\mathbf{11.7}$/$11118$
    & $\mathbf{52.7}$/$15089$
    & $\mathbf{\textcolor{DarkGreen}{46.1}}$/$15737$ \\
  CIB$^\dagger_{\beta^+\!,\, 1.5B}$
    & $85.4$/$5982$
    & $41.5$/$20954$
    & $34.0$/$21407$
    & $10.9$/$10051$
    & $50.8$/$13894$
    & $44.5$/$14458$ \\
  CIB$^\dagger_{7B}$
    & $86.7$/$\mathbf{5789}$
    & $42.7$/$\mathbf{20547}$
    & $32.3$/$\mathbf{20678}$
    & $10.6$/$\mathbf{8752}$
    & $51.1$/$\mathbf{12242}$
    & $44.7$/$\textcolor{DarkGreen}{\mathbf{13602}}$ \\
\midrule
\multicolumn{7}{@{}l}{\textit{DLER-1.5B}} \\
\cmidrule{1-7}
  Baseline
    & $86.4$/$1873$
    & $33.4$/$3478$
    & $23.9$/$3290$
    & $20.9$/$2587$
    & $54.1$/$2766$
    & $43.7$/$2799$ \\
  L1-Exact$^\ddagger$
    & $86.6$/$1751$
    & $35.0$/$3274$
    & $22.7$/$3037$
    & $21.2$/$2277$
    & $54.0$/$2577$
    & $43.9$/$2583$ \\
  CIB$^\dagger_{\beta^-\!,\, 1.5B}$
    & $\mathbf{88.4}$/$1687$
    & $\mathbf{35.4}$/$3198$
    & $\mathbf{25.3}$/$3260$
    & $\mathbf{21.6}$/$2346$
    & $\mathbf{54.3}$/$2616$
    & $\textcolor{DarkGreen}{\mathbf{45.0}}$/$2621$ \\
  CIB$^\dagger_{\beta^+\!,\, 1.5B}$
    & $87.6$/$1421$
    & $34.1$/$3124$
    & $24.2$/$3027$
    & $21.0$/$2305$
    & $54.1$/$2544$
    & $44.2$/$2484$ \\
  CIB$^\dagger_{7B}$
    & $85.0$/$\mathbf{805}$
    & $31.6$/$\mathbf{2504}$
    & $22.4$/$\mathbf{2257}$
    & $20.4$/$\mathbf{1327}$
    & $51.6$/$\mathbf{1631}$
    & $42.2$/$\textcolor{DarkGreen}{\mathbf{1705}}$ \\
\midrule
\multicolumn{7}{@{}l}{\textit{DLER-7B}} \\
\cmidrule{1-7}
  Baseline
    & $93.6$/$1159$
    & $49.6$/$3221$
    & $36.6$/$3271$
    & $27.3$/$1849$
    & $\mathbf{65.0}$/$2840$
    & $54.4$/$2468$ \\
  L1-Exact$^\ddagger$
    & $92.8$/$675$
    & $46.2$/$2982$
    & $31.2$/$2564$
    & $26.1$/$1144$
    & $61.9$/$1872$
    & $51.6$/$1847$ \\
  CIB$^\dagger_{\beta^-, 7B}$
    & $\mathbf{94.0}$/$1033$
    & $\mathbf{49.7}$/$2642$
    & $\mathbf{37.1}$/$2962$
    & $\mathbf{27.6}$/$1381$
    & $64.1$/$2361$
    & $\textcolor{DarkGreen}{\mathbf{54.5}}$/$2076$ \\
  CIB$^\dagger_{\beta^+, 7B}$
    & $92.2$/$\mathbf{678}$
    & $48.2$/$\mathbf{2419}$
    & $35.4$/$\mathbf{2550}$
    & $25.8$/$\mathbf{851}$
    & $62.7$/$\mathbf{1704}$
    & $52.9$/$\textcolor{DarkGreen}{\mathbf{1640}}$ \\
\bottomrule
\end{tabularx}

\end{table}
We begin by verifying that the minimality reward (Section~\ref{sec:methodology}) provides a usable learning signal.
As shown in Figure~\ref{fig:side_side_1} (left), it exhibits a negative correlation with completion length, yet its limited dispersion indicates sensitivity to the token sequence rather than to length alone.\\
Table~\ref{tab:results} summarizes accuracy and average CoT length across five
mathematical reasoning benchmarks. For each model family, we evaluate CIB at two compression strengths ($\beta^{-}\!=\!5\!\times\!10^{-5}$, $\beta^{+}\!=\!1.5\!\times\!10^{-4}$), selected on the training distribution to target mild and “more aggressive'' compression, and two prior sizes (1.5B and 7B). 
For DeepScaleR-1.5B we include multiple external baselines, since available, to provide a broader comparison: L3L1-EXACT~\citep{aggarwal2025l1}, evaluated using the authors' publicly available checkpoint, and ThinkPrune-I2k~\citep{hou2025thinkprune}, HAPO~\citep{huang2025hapo}, and LAPO-I~\cite{wu2025lapo} (marked with $^{\S}$), whose results are taken directly from the literature~\cite{wu2025lapo}---hence some benchmark entries are not available. Concerning our models, full results with standard error of the mean (SEM) are in Tables~\ref{tab:results_math_1}--\ref{tab:res_beyond_math} in Appendix~\ref{app:add_res}. Moreover, a comparison with additional methods is reported in Appendix~\ref{app:literature_results}.

\paragraph{CIB improves accuracy while compressing reasoning.}
Across all four model groups, CIB$_{\beta^-}$ improves average accuracy over the uncompressed baselines while simultaneously reducing CoT length (Table~\ref{tab:results}).
DeepScaleR-1.5B provides the broadest comparison against length-based penalties from the literature. Here, CIB$_{\beta^-}$ increases avg. accuracy by $1.5\%$ while reducing avg. CoT length by $\sim24\%$, with gains consistent across both easy and hard benchmarks. In contrast, length-based penalty variants often incur accuracy degradation---most visibly, L3L1-EXACT drops AIME24 accuracy by $13.8\%$ despite achieving strong compression. 
Compared to ThinkPrune-I2k~\citep{hou2025thinkprune}, CIB$_{7B}$ achieves higher accuracy and stronger compression on all shared benchmarks, while CIB$_{\beta^-,1.5B}$ further increases accuracy at the cost of slightly longer CoT. LAPO-I~\citep{wu2025lapo} generally sits between CIB$_{\beta^-,1.5B}$ and CIB$_{7B}$ on the accuracy--compression plane: CIB$_{\beta^-,1.5B}$ achieves higher accuracy on MATH500 and AIME24, while CIB$_{7B}$ compresses more aggressively. Although LAPO-I appears to improve upon CIB$_{7B}$ in accuracy, the error analysis in Appendix~\ref{app:add_res} shows that the two agree within uncertainty.
Increasing $\beta$ from $\beta^{-}$ to $\beta^{+}$ smoothly trades a small amount of accuracy for additional compression---up to ${\sim}1.6$ points for $5\%$--$21\%$ further token reduction (Table~\ref{tab:results}, Avg.\ column). 
Compared to the 1.5B prior, the 7B prior unlocks deeper compression, yielding the shortest CoT in every group; the only exception is L3L1-EXACT on DeepScaleR-1.5B, whose stronger compression comes at the cost of substantial accuracy degradation (Table~\ref{tab:results}, red entries).
These benefits generalize across all model families considered. On Nemotron-1.5B, CIB$_{\beta^-,1.5B}$ achieves the largest accuracy gain ($+2.3$ points) with a ${\sim}7\%$ token reduction, while CIB$_{7B}$ reduces tokens by ${\sim}20\%$ while still improving over the baseline. CIB also scales to larger models: on DLER-7B, CIB$_{\beta^-,7B}$ matches the baseline's strong accuracy while reducing average CoT length by ${\sim}16\%$, and CIB$_{\beta^+,7B}$ achieves roughly a one-third reduction with a $1.5$-point accuracy trade-off.

\paragraph{Generalization beyond math.}
Although CIB is trained exclusively on math data, its benefits transfer
to science, knowledge, and code benchmarks
(Table~\ref{tab:res_beyond_math} in Appendix~\ref{app:add_res}).  On
DeepScaleR-1.5B, CIB$_{\beta^-}$ improves accuracy on GPQA Diamond ($+1.9$\%), MMLU-Pro ($+2.2$\%) while matching LiveCodeBench, with
token reductions between up to $38.6\%$. On DLER-7B, it achieves comparable performance with the baseline while compressing CoT up to $14.6\%$.  These results indicate that the compression learned by CIB is not specific to mathematical reasoning but generalizes to broader reasoning and knowledge tasks.

\paragraph{Effect of $\beta$ on the Accuracy–Efficiency Trade-off.}
The $\beta$ weight in the \ac{CIB} objective (Equation~\ref{eq:cib_objective_2}) provides fine-grained control over the accuracy--compression trade-off. As illustrated in Figure~\ref{fig:side_side_1} (right), sweeping $\beta$ across its range traces a smooth Pareto frontier: lower values prioritize accuracy, while higher values aggressively compress the chain of thought. On AIME24, for instance, intermediate $\beta$ settings retain the majority of the baseline accuracy while substantially reducing token count, whereas extreme values reveal a clear knee beyond which further compression degrades reasoning quality. This continuous knob allows practitioners to select the operating point that best suits their downstream latency or cost constraints.

\paragraph{Efficiency Gain.}
To quantify the trade-off between reasoning performance and computational cost, we define two metrics: the \emph{Compression Factor} $= 1 - \ell_{\text{CIB}}/\ell_{\text{base}}$, measuring the relative reduction in completion length, and the \emph{Accuracy Gain} $= \mathcal{A}_{\text{CIB}}/\mathcal{A}_{\text{base}}$, 
normalizing performance against the baseline. Figure~\ref{fig:side_side_2} (left) visualizes these metrics across 
architectures and benchmarks. The upper half-plane (``\textcolor{DarkGreen}{Golden Zone}'') represents models that are 
strictly superior in both speed and accuracy, while models in the bottom-right offer significant speedups where partial accuracy degradation is permissible. By filtering for the Golden Zone, we select models that are ``smarter'' and faster rather than simply truncated. Note that gains for each model are normalized to its own baseline. For visual clarity, Figure~\ref{fig:side_side_2} (left) focuses on DLER-1.5B and DeepScaleR across the five math benchmarks and two different prior sizes. 

\paragraph{Information Density.} To understand \emph{how} \ac{CIB} compresses, we analyze the token-wise surprisal, $-\log p(z_t \mid z_{<t}, x)$, measured relative to a frozen reference model. As shown in Figure~\ref{fig:side_side_2} (right), the baseline profile (dashed gray) exhibits valleys of low surprisal ($\approx 0.1$ nats), indicating highly predictable sequences carrying negligible unique information. In contrast, \ac{CIB} profiles (solid blue and green) maintain a higher floor ($\gtrsim 0.2$ nats): by penalizing cumulative low-utility transitions, the objective acts as a high-pass semantic filter, excising predictable filler while preserving high-entropy peaks. This confirms that \ac{CIB} achieves compression not by 
random truncation, but by maximizing the average information rate, distilling reasoning traces to their essential computational content.

\begin{figure}[t]
\centering
  \begin{minipage}{0.49\linewidth}
    \centering
    \includegraphics[width=\linewidth]{figures/efficiency_gain_no_legend.pdf}
    \label{fig:efficiency_gain}
  \end{minipage}
  \begin{minipage}{0.49\linewidth}
    \centering
    \includegraphics[width=\linewidth]{figures/information_density.pdf}
    \label{fig:info_density}
  \end{minipage}\hfill
  \caption{
  \textbf{Left: Efficiency Gain.} Efficiency gain of \ac{CIB} across diverse benchmarks and models. Points falling in the upper half-plane (``\textcolor{DarkGreen}{Golden Zone}") exhibit strictly superior efficiency, achieving higher information density with reduced computational cost. Colors denote models: CIB$^{DLER}_{Q_\phi\!=\!1.5B}$ ({\color{dlercib15}$\bullet$}), CIB$^{DLER}_{Q_\phi\!=\!7B}$ ({\color{dlercib7}$\bullet$}), CIB$^{DeepSc.}_{Q_\phi\!=\!1.5B}$ ({\color{deepcib15}$\bullet$}), CIB$^{DeepSc.}_{Q_\phi\!=\!7B}$ ({\color{deepcib7}$\bullet$}). Shapes denote tasks: MATH500 ($\bullet$), AIME24 ($\blacksquare$), AIME25 ($\blacktriangle$), Minerva ($\blacklozenge$), Olympiad ($\blacktriangledown$).
  \textbf{Right: Information Density Profile.} Token-wise surprisal evaluated against the baseline language prior. Lower surprisal indicates predictable, low-information filler. 
  }
  \label{fig:side_side_2}
\end{figure}

\paragraph{Qualitative CoT Comparison.} 
To validate that our objective targets reasoning redundancy rather than essential logic, we analyzed reasoning traces across arithmetic and symbolic tasks. We observe that CIB systematically eliminates conversational scaffolding, redundant verification loops, and tautological checks. Unlike naive truncation, the semantic prior fundamentally alters the reasoning topology, preserving the ``computational bridge'' while filtering transitions that offer low marginal information regarding $Y$. Detailed case studies are provided in Appendix~\ref{app:qualitative_analysis} (Figure~\ref{fig:cot_1}--\ref{fig:cot_3}).





\paragraph{Prior Overhead and Capacity Impact.}
It is important to understand whether the prior model represents a training bottleneck. Because $Q_\phi$ is frozen, it requires no gradient accumulation, backward passes, or optimizer states; it only adds a static VRAM footprint and a single forward pass over the generated trajectory, scoring all tokens in parallel. Using our distributed setup (4$\times$H100; 8 generations per sample, vLLM backend), Table~\ref{tab:prior_efficiency} compares wall-clock time for four prior sizes against a length-based penalty reward for DLER-1.5B. Scaling the prior up to 7B reduces rollout token throughput by less than $6\%$ and the effective data collection rate by less than $6.2\%$ (1.47$\rightarrow$1.38 samples/sec), confirming that training is largely insensitive to prior size in wall-clock terms.

  
  

\begin{table}[!ht]
\centering
\caption{\textbf{Training overhead of the semantic prior.} Wall-clock comparison for DLER-1.5B. Average value and SEM over 100 steps.}
\label{tab:prior_efficiency}
\begin{tabular}{lccc}
\toprule
\textbf{Prior} & \textbf{Step Time (s)} & \textbf{Tokens/sec} & \textbf{Samples/sec} \\
\midrule
Length-based penalty (no prior model) 
    & $21.1\,\pm\,0.3$ & $3691\,\pm\,92$  & $1.47\,\pm\,0.02$ \\
$Q_\phi=$ 0.5B & $21.8\,\pm\,0.3$ & $3677\,\pm\,97$  & $1.46\,\pm\,0.03$ \\
$Q_\phi=$ 1.5B & $22.5\,\pm\,0.3$ & $3482\,\pm\,96$  & $1.42\,\pm\,0.03$ \\
$Q_\phi=$ 3B   & $22.4\,\pm\,0.4$ & $3489\,\pm\,99$ & $1.43\,\pm\,0.03$ \\
$Q_\phi=$ 7B   & $23.2\,\pm\,0.4$ & $3449\,\pm\,98$  & $1.38\,\pm\,0.03$ \\
\bottomrule
\end{tabular}
\end{table}

While the computational cost is benign, prior \emph{capacity} matters for the quality of the semantic signal. If the prior is too weak, its predictive distribution flattens toward uniform and the semantic cost degenerates into a naive length penalty, unable to distinguish redundant tokens from critical reasoning steps. Similarly, a cross-family prior (e.g., Llama for a Qwen policy) introduces tokenization-dependent confounds: vocabulary and segmentation mismatches can penalize the policy's native formatting rather than its true verbosity. Studying whether coarser-grained costs could mitigate this effect is an interesting direction for future work.

\section{Conclusions} \label{sec:conclusions}
In this work, we address efficient reasoning in \ac{LLMs} by reframing
``Budget Forcing'' from an information-theoretic perspective. We identified
the ``\paradox''---the structural inconsistency in applying standard
Information Bottleneck principles to transformer architectures---and proposed
a Conditional Information Bottleneck framework to resolve it. This yields a
principled RL objective that subsumes existing length-based penalties as
special cases (Propositions~\ref{prop:linear_penalty} and~\ref{prop:lcpo})
while introducing a semantic token cost based on surprisal under a language
model prior. Our experiments on reasoning benchmarks show that penalizing tokens by their semantic information content yields a more favorable
accuracy--length trade-off than naive length penalties. By tuning $\beta$,
one can traverse the Pareto frontier, achieving up to $\sim48$\% token
reduction with $\lesssim 1.5$\% accuracy degradation. We further find that
stronger priors provide better redundancy estimates, enabling more aggressive
compression with minimal performance loss---while introducing negligible
training overhead ($\sim 6$\% throughput reduction).
These findings suggest that efficient inference requires moving beyond a
``flat tax'' on token count toward metrics that value computation based on
its utility. More broadly, our framework's central contribution is a general
recipe for optimizing reasoning efficiency: by varying the implementations of
the verifier and the prior, practitioners can explore a broad design space
tailored to specific downstream tasks and deployment constraints.

\paragraph{Limitations.}

While our results demonstrate the effectiveness of \ac{CIB} for compressing reasoning traces, some limitations should be acknowledged.
The quality of compression depends on the reference prior $Q_\phi$. As discussed in Section~\ref{sec:exp_res}, a weak prior flattens toward uniform, degenerating the semantic cost into a naive length penalty. Furthermore, cross-family priors (e.g., a Llama prior for a Qwen policy) introduce tokenization-dependent confounds that we have not addressed. Studying coarser-grained cost functions that are robust to vocabulary mismatch is a promising direction.
The regularization coefficient $\beta$ and the prior capacity $Q_\phi$ jointly determine the operating point on the Pareto frontier. In our experiments, we used $\beta$ values optimized for the 1.5B prior and did not re-tune when scaling to 7B, which may account for the slight accuracy gap observed. A systematic study of the $(\beta, Q_\phi)$ interaction could yield further improvements.

\bibliographystyle{plain}
\bibliography{references}

\newpage
\appendix

\section{Conditional Information Bottleneck}
\label{app:CIB}

\paragraph{Preliminaries.} We denote the query, answer, and reasoning traces respectively by the random variables $X, Y$ and $Z$. In this work, we assume that $X,Y,Z \in \gX^*$, where $\gX^*$ is the set of all finite sequences of tokens in the token space $\gX$.  The underlying probability space of these random variables is given by $(\gX^*, \Sigma^*)$, where $\Sigma^*$ is the co-product $\sigma$-algebra using the $\sigma$-algebras on each $\gX^n$. The space $\gX$ is assumed to be discrete.
The entropy of the random variable $X$, the conditional entropy of $Y$ given $X$, and the mutual information between $X$ and $Y$ are defined as:
\begin{align}
\nonumber
H(X)&:=\E_{X\sim P_X}(-\log P(X) ), \\ \nonumber
H(Y|X)&:= \E_{(X,Y)\sim P_{XY}}(-\log P(Y|X) ), \\ \nonumber
I(X;Y) &:= H(X) - H(X|Y).    
\end{align}

\paragraph{Formulation.}
Consider the query-answer pair $(X,Y)$. The function of reasoning is to generate $Z$ such that the LLM probability $\pi_\theta(Y|Z,X)$ is maximized. Budget forcing aims at compressing $Z$. 

In the classical information bottleneck, this problem is formulated as maximizing the information gain $I(Z;Y)$ while minimizing the redundancy with $I(X;Z)$, yielding the following optimization problem:
\[
\min_{\pi_\theta(Z|X)} I(X;Z) - \beta I(Y;Z),
\]
under the Markov assumption $Y\leftrightarrow X \leftrightarrow Z$ and the marginal probability constraint $    \sum_{z}\pi_\theta(z|x) = 1, \forall x
$. In information bottleneck literature, the mutual information $I(X;Z)$ is called rate or complexity, while $I(Y;Z)$ is called relevance or information. We use the terms \textit{minimality} and \textit{sufficiency} in this paper.

In the context of reasoning, the Markov chain $Y\leftrightarrow X \leftrightarrow Z$  does not hold, namely $\pi_\theta(Y|X,Z) \neq \pi_\theta(Y|X)$ because the dense attention mechanism breaks the Markov relation, and the response depends on both the query and the reasoning trace.

There is another subtle issue with the classical information bottleneck setup. The outcome of the optimization problem is $\pi_\theta(z|x)$. The Markov property enables us to generate $y$ based on $z$ using $p(y|z) = \sum_{x}p(y|x)p(x|z)$, which implicitly assumes the knowledge of the conditional probability $p(y|x)$ at decoding time. Without Markov property, this relation cannot be used. Besides, $p(y|x)$ is \textit{not} available at the decoding time, and it is unknown in the context of reasoning. The goal of reasoning trace $z$ is to enable the simulation of $p(y|x)$ using $\pi_\theta(z|x)\pi_\theta(y|z,x)$, which points toward a connection with \textit{channel simulation} literature \cite{Li2024-channelsimulation}.  

We would like to maximize the information gain of the reasoning trace $Z$ with the knowledge of the query $X$. We can measure the gain using the conditional mutual information $I(Y;Z|X)$. 
The conditional information bottleneck version is as follows: 
\[
\min_{P(Z|X,Y)} I(X;Z) - \beta I(Y;Z|X),
\]
where we need the following marginal probability constraints to be satisfied:
\begin{align*}
    \sum_{z}P(z|x,y) = 1, \forall x, y.
\end{align*}
If we cast the problem as a maximization problem, the final optimization problem in terms of $P(Z|X,Y)$ is as follows:
\begin{align}
\max_{P(Z|X,Y)} & \sum_{x,y,z} P(x,y) P(z|x,y) \log P(y|x,z) - \beta \sum_{x,z} P(x)P(z|x)\log \frac{P(z|x)}{P(z)} \nonumber \\
    \text{s.t.} &\sum_{z}P(z|x,y) = 1, \forall x, y. \label{eq:CIB_v0}
\end{align}
The dependence on $P(z|x,y)$ is implicit in various distributions  like $P(z|x), p(z)$ and $p(y|x,z)$, and we can solve this problem using an iterative algorithm, similar to Blahut-Arimoto algorithm, proposed in \cite{tishby1999information}. Given that $z$ is from the space of reasoning traces, it is not tractable to use the same approach. 

We would like to remark that the information bottleneck problem is an instance of lossy compression under log-loss distortion metric. In this sense, the conditional information bottleneck is akin to lossy compression with side information, which was studied by Wyner and Ziv \cite{wyner1976rate} under different settings. 


\paragraph{Practical Implementation.}
The information bottleneck optimization in deep learning is directly intractable, and the approximate bounds are used for training such variational information bottleneck \cite{Alemi2017-variationalIB}. In practice, for training LLMs, we do not directly optimize over $P(Z|Y,X)$ but rather optimize the model parameter $\theta$.

The parameter $\theta$ controls the two probabilities $ \pi_\theta(z|x)$ and $\pi_\theta(y|x,z)$, both the inference part of the AR generative model (instead of $P(Z|Y,X)$. We solve the following optimization problem:
\begin{align}
\max_{\theta}\quad & \gL_{\text{CIB}} (\theta)  = I(Y;Z|X) - \beta I(X;Z)\nonumber\\
\quad  \text{s.t.} & \quad  P(y|x) = \sum_{z}\pi_\theta(y|x,z)\pi_\theta(z|x), \quad\forall x,y.
\end{align}
The last constraint should be satisfied to have a valid probability distribution. Since all the probabilities $\pi_\theta(\cdot|\cdot)$ are parametrized to sum up to one, we do not need to explicitly add the constraint. 

\color{black}
Note that we can write $P(z|x,y){P(y|x)} = {\pi_\theta(y|x,z)\pi_\theta(z|x)}$. In other words, we  can obtain a valid $P(z|x,y)$ from $\pi_\theta(y|x,z)$ and $\pi_\theta(z|x)$, and vice versa. Therefore, the above optimization problem is just a reparameterization of the original information bottleneck problem and yields the same optimal value.
\color{black}

There are some challenges with the above optimization problem. First, the conditional distribution $P(y|x)$ is unknown in general, and we have access to it only through the samples. Second, we should approximate the information theoretic quantities, namely the sufficiency term $I(Y;Z|X)$ and the minimality term $I(X;Z)$. We try to address these challenges below.

\paragraph{Sufficiency term.}
Consider the first term in the objective function. We can write it as a function of the optimization parameters $\pi_\theta(y|x,z), \pi_\theta(z|x)$ as follows: 
  \begin{align*}
       I(Y;Z|X)  & = \sum_{x,y,z} P(x,y) P(z|x,y)\log\frac{\pi_\theta(y|x,z)}{P(y|x)} \\
         & = \sum_{x,y,z} P(x,y) \pi_\theta(z|x) \frac{\pi_\theta(y|x,z)}{P(y|x)} \log\frac{\pi_\theta(y|x,z)}{P(y|x)} \\
         & \geq \sum_{x,y,z} P(x,y) \pi_\theta(z|x) \log\frac{\pi_\theta(y|x,z)}{P(y|x)} 
  \end{align*}
  where we used $x\log x \geq \log x$ in the last step. Therefore, we can  maximize the lower bound and approximate it further using the query-answer samples $(x_i,y_i)$. The first term of the optimization problem is:
  \begin{equation}
      \sum_{i=1}^m \E_{Z\sim \pi_\theta(Z|x_i)} [\log \pi_\theta(y_i|x_i,Z)].
  \end{equation}

We can also approximate the bound using variational approximation of \cite{Alemi2017-variationalIB}. We introduce a \textit{verifier model} $Q_\rho(y|x,z)$ as variational parameter:
  \begin{align*}
       I(Y;Z|X)  & = \sum_{x,y,z} P(x,y) P(z|x,y)\log\frac{\pi_\theta(y|x,z)}{P(y|x)} \\
       & = \sum_{x,y,z} P(x,y) P(z|x,y)\log\frac{\pi_\theta(y|x,z)Q_\rho(y|z,x)}{P(y|x)Q_\rho(y|z,x)} \\
       & = \sum_{x,y,z} P(x,y) P(z|x,y)\log\frac{Q_\rho(y|z,x)}{P(y|x)} + \E_{(X,Z)} D_{KL}(\pi_\theta(\cdot|X,Z)\Vert Q_\rho(\cdot|X,Z)) \\
       & \geq  \sum_{x,y,z} P(x,y) P(z|x,y)\log\frac{Q_\rho(y|z,x)}{P(y|x)}\\
         & = \sum_{x,y,z} P(x,y) \pi_\theta(z|x) \frac{\pi_\theta(y|x,z)}{P(y|x)} \log\frac{Q_\rho(y|z,x)}{P(y|x)} 
  \end{align*}
Throughout the paper, we assume that there is always a unique answer to each query. Using this assumption, we can lower bound the last step as follows:
  \begin{align*}
         \sum_{x,y,z} P(x,y)  &  \pi_\theta(z|x) \frac{\pi_\theta(y|x,z)}{P(y|x)} \log\frac{Q_\rho(y|z,x)}{P(y|x)} \\
         & \geq \sum_{x,y=y_{\text{true}}(x),z} P(x,y) \pi_\theta(z|x) \log{Q_\rho(y|z,x)},
  \end{align*}
  where we used the assumption that $P(y|x)=\delta(y-y_{\text{true}}(x))$. 
Finally, we can maximize the following objective function for the sufficiency term:
    \begin{align*}
      \sum_{i=1}^m \E_{Z\sim \pi_\theta(Z|x_i)} [\log Q_\rho(y_i|x_i,Z)].
  \end{align*}

\paragraph{Minimality term.} For the minimality term, we use a variational approximation to find a variational bound similar to \cite{Alemi2017-variationalIB}. This upper bound combined with the above lower bound provides a general lower bound on the conditional information bottleneck objective which we try to maximize. First note that:
\[
I(X;Z)   = \sum_{x,z} P(x) \pi_\theta(z|x)\log\frac{\pi_\theta(z|x)}{P(z)}.
\]
There is a dependence on $P(z)$ which requires marginalization over $x$ and is intractable. The derivation of the variational lower bound is quite standard:
  \begin{align*}
       I(X;Z)  & = \sum_{x,z} P(x) \pi_\theta(z|x)\log\frac{\pi_\theta(z|x)}{P(z)} \\
         & = \sum_{x,z} P(x) \pi_\theta(z|x)\log\frac{\pi_\theta(z|x)}{Q_\phi(z)} - D_{KL}(P(Z)\Vert Q_\phi(Z)) \\
         & \leq \sum_{x,z} P(x) \pi_\theta(z|x)\log\frac{\pi_\theta(z|x)}{Q_\phi(z)}.
  \end{align*}
The variational distribution $Q_\phi(\cdot)$ is supposed to capture the distribution over reasoning traces without conditioning on the prompt. 

The final optimization problem consists of finding a policy $\pi_\theta(z|x)$ that maximizes the returns defined from the above approximate bounds using $Q_\phi$ and $Q_\rho$.
\paragraph{Marginal probability constraint.} Let's consider again the constraint for the conditional information bottleneck:
\[
 P(y|x) = \sum_{z}\pi_\theta(y|x,z)\pi_\theta(z|x).
\]
As we mentioned above, the conditional probability is unknown. Now, assume that for each query there is a unique answer: $P(y|x) = \delta(y-y_{\text{true}}(x))$. In this case, the marginal probability constraint holds only if $\pi_\theta(y|x,z)$ is also equal to $\delta(y-y_{\text{true}}(x))$, namely:
\begin{equation}
    \pi_\theta(y_{\text{true}}(x)|x,z) = 1,
\end{equation}
We do not need explicitly include this constraint in the optimization problem, because it amounts to maximizing the probability of the correct answer under $\pi_\theta(y|x,z)$ which is already part of the optimization problem. 

\section{Additional Results}\label{app:add_res}

\subsection{Domain Generalization}\label{app:dom_gen}
Tables~\ref{tab:results_math_1}--\ref{tab:res_beyond_math} extend the summary in Table~\ref{tab:results} by reporting per-benchmark standard error of the mean (SEM) compute over 10 seeds for all benchmarks expect for MATH500 and LiceCodeBench. We highlight several observations that complement the main-text analysis.

\paragraph{Statistical reliability.}
On competition-level math benchmarks (AIME24, AIME25), individual CIB$_{\beta^-}$ accuracy gains are typically comparable in magnitude to the SEM (e.g., DeepScaleR AIME24: $38.2_{\pm1.5} \!\to\! 40.2_{\pm1.9}$; DLER-1.5B AIME24: $33.4_{\pm1.4} \!\to\! 35.4_{\pm2.0}$; Table~\ref{tab:results_math_1}), which is expected given the inherent variance of 30-problem test sets. The meaningful signal is \emph{consistency}: CIB$_{\beta^-}$ raises accuracy on 25 of 32 model$\times$benchmark evaluations across Tables~\ref{tab:results_math_1}--\ref{tab:res_beyond_math}
In contrast, the L1-based degradations are individually unambiguous---L3L1-EXACT on AIME24 drops by $13.8$ points ($38.2_{\pm1.5} \!\to\! 24.4_{\pm1.3}$), DLER-7B L1-Exact on OlympiadBench by $3.1$ points ($65.0_{\pm0.4} \!\to\! 61.9_{\pm0.7}$)---well beyond overlapping confidence intervals.

\begin{table}[!ht]
\centering

\caption{\textbf{Accuracy ($\uparrow$) and average CoT length ($\downarrow$) — Math Benchmarks.} Bold indicates best performance within each model group. \textbf{\textcolor{DarkRed}{red}} marks significant degradation. CIB subscripts denote the compression setting ($\beta^-\!=\!5 \times 10^{-5}$, $\beta^+\!=\!1.5 \times 10^{-4}$) and the prior size (1.5B or 7B). SEM is shown as subscript.\\
$^{*}$: Pass@1 with $n=16$ for consistency, the official HuggingFace page reports pass@1(Avg@64);\\ $^\dagger$: Our models; $^\ddagger$: Models we trained using the L1-EXACT length penalty~\citep{aggarwal2025l1}.}
\label{tab:results_math_1}

\resizebox{\textwidth}{!}{  

\begin{tabularx}{\linewidth}{@{}l *{3}{Y}@{}}
\toprule
  \multicolumn{1}{c}{Model}
  & \multicolumn{1}{c}{MATH500}
  & \multicolumn{1}{c}{AIME24}
  & \multicolumn{1}{c}{AIME25} \\
  \multicolumn{1}{c}{}
  & \multicolumn{1}{c}{Acc.\,/\,Tok.}
  & \multicolumn{1}{c}{Acc.\,/\,Tok.}
  & \multicolumn{1}{c}{Acc.\,/\,Tok.} \\
\midrule
\multicolumn{4}{@{}l}{\textit{DeepScaleR-1.5B}} \\
\cmidrule{1-4}
  Baseline
    & $85.8_{\pm1.6}$\,/\,$3190_{\pm132}$
    & $38.2_{\pm1.5}$\,/\,$9331_{\pm116}$
    & $25.4_{\pm1.5}$\,/\,$8907_{\pm144}$ \\
  L3L1-EXACT
    & $85.2_{\pm1.6}$\,/\,$1932_{\pm60}$
    & $\mathbf{\textcolor{DarkRed}{24.4}}_{\pm1.3}$\,/\,$\mathbf{2637}_{\pm15}$
    & $\mathbf{\textcolor{DarkRed}{20.2}}_{\pm1.3}$\,/\,$\mathbf{2608}_{\pm17}$ \\
  CIB$^\dagger_{\beta^-\!,\, 1.5B}$
    & $\mathbf{87.6}_{\pm1.5}$\,/\,$2359_{\pm121}$
    & $\mathbf{40.2}_{\pm1.9}$\,/\,$7298_{\pm141}$
    & $\mathbf{27.8}_{\pm2.1}$\,/\,$6866_{\pm170}$ \\
  CIB$^\dagger_{\beta^+\!,\, 1.5B}$
    & $87.2_{\pm1.5}$\,/\,$2017_{\pm107}$
    & $38.2_{\pm2.0}$\,/\,$6937_{\pm141}$
    & $27.6_{\pm2.1}$\,/\,$6757_{\pm184}$ \\
  CIB$^\dagger_{7B}$
    & $86.1_{\pm1.6}$\,/\,$\mathbf{1337}_{\pm93}$
    & $37.5_{\pm1.9}$\,/\,$5003_{\pm143}$
    & $25.0_{\pm1.9}$\,/\,$5389_{\pm164}$ \\
\midrule
\multicolumn{4}{@{}l}{\textit{Nemotron-1.5B}} \\
\cmidrule{1-4}
  Baseline$^{*}$
    & $85.8_{\pm1.6}$\,/\,$8509_{\pm382}$
    & $43.1_{\pm1.2}$\,/\,$22931_{\pm326}$
    & $\mathbf{35.1}_{\pm1.3}$\,/\,$24086_{\pm333}$ \\
  L1-Exact$^\ddagger$
    & $85.8_{\pm1.5}$\,/\,$8807_{\pm393}$
    & $40.2_{\pm1.4}$\,/\,$22079_{\pm383}$
    & $30.8_{\pm1.4}$\,/\,$23243_{\pm400}$ \\
  CIB$^\dagger_{\beta^-\!,\, 1.5B}$
    & $\mathbf{86.8}_{\pm1.6}$\,/\,$7536_{\pm412}$
    & $\mathbf{45.3}_{\pm1.9}$\,/\,$21962_{\pm483}$
    & $34.1_{\pm1.9}$\,/\,$22980_{\pm547}$ \\
  CIB$^\dagger_{\beta^+\!,\, 1.5B}$
    & $85.4_{\pm1.6}$\,/\,$5982_{\pm354}$
    & $41.5_{\pm1.9}$\,/\,$20954_{\pm454}$
    & $34.0_{\pm1.8}$\,/\,$21407_{\pm542}$ \\
  CIB$^\dagger_{7B}$
    & $86.7_{\pm1.5}$\,/\,$\mathbf{5789}_{\pm366}$
    & $42.7_{\pm1.8}$\,/\,$\mathbf{20547}_{\pm470}$
    & $32.3_{\pm1.8}$\,/\,$\mathbf{20678}_{\pm532}$ \\
\midrule
\multicolumn{4}{@{}l}{\textit{DLER-1.5B}} \\
\cmidrule{1-4}
  Baseline
    & $86.4_{\pm1.5}$\,/\,$1873_{\pm91}$
    & $33.4_{\pm1.4}$\,/\,$3478_{\pm25}$
    & $23.9_{\pm1.5}$\,/\,$3290_{\pm29}$ \\
  L1-Exact$^\ddagger$
    & $86.6_{\pm1.5}$\,/\,$1751_{\pm111}$
    & $35.0_{\pm2.1}$\,/\,$3274_{\pm34}$
    & $22.7_{\pm2.0}$\,/\,$3037_{\pm41}$ \\
  CIB$^\dagger_{\beta^-\!,\, 1.5B}$
    & $\mathbf{88.4}_{\pm1.5}$\,/\,$1687_{\pm102}$
    & $\mathbf{35.4}_{\pm2.0}$\,/\,$3198_{\pm40}$
    & $\mathbf{25.3}_{\pm2.1}$\,/\,$3260_{\pm56}$ \\
  CIB$^\dagger_{\beta^+\!,\, 1.5B}$
    & $87.6_{\pm1.4}$\,/\,$1421_{\pm64}$
    & $34.1_{\pm2.0}$\,/\,$3124_{\pm43}$
    & $24.2_{\pm2.1}$\,/\,$3027_{\pm47}$ \\
  CIB$^\dagger_{7B}$
    & $85.0_{\pm1.6}$\,/\,$\mathbf{805}_{\pm35}$
    & $31.6_{\pm2.0}$\,/\,$\mathbf{2504}_{\pm37}$
    & $22.4_{\pm1.9}$\,/\,$\mathbf{2257}_{\pm38}$ \\
\midrule
\multicolumn{4}{@{}l}{\textit{DLER-7B}} \\
\cmidrule{1-4}
  Baseline
    & $93.6_{\pm1.1}$\,/\,$1159_{\pm41}$
    & $49.6_{\pm1.7}$\,/\,$3221_{\pm47}$
    & $36.6_{\pm1.6}$\,/\,$3271_{\pm54}$ \\
  L1-Exact$^\ddagger$
    & $92.8_{\pm1.2}$\,/\,$675_{\pm33}$
    & $46.2_{\pm2.5}$\,/\,$2982_{\pm62}$
    & $31.2_{\pm2.2}$\,/\,$2564_{\pm61}$ \\
  CIB$^\dagger_{\beta^-, 7B}$
    & $\mathbf{94.0}_{\pm1.1}$\,/\,$1033_{\pm39}$
    & $\mathbf{49.7}_{\pm2.3}$\,/\,$2642_{\pm69}$
    & $\mathbf{37.1}_{\pm2.2}$\,/\,$2962_{\pm77}$ \\
  CIB$^\dagger_{\beta^+, 7B}$
    & $92.2_{\pm1.2}$\,/\,$\mathbf{678}_{\pm34}$
    & $48.2_{\pm2.2}$\,/\,$\mathbf{2419}_{\pm75}$
    & $35.4_{\pm2.3}$\,/\,$\mathbf{2550}_{\pm72}$ \\
\bottomrule
\end{tabularx}

}

\end{table}

\begin{table}[!ht]
\centering

\caption{\textbf{Accuracy ($\uparrow$) and average CoT length ($\downarrow$) — Math Benchmarks.} Bold indicates best performance within each model group. \textbf{\textcolor{DarkRed}{red}} marks significant degradation. CIB subscripts denote the compression setting ($\beta^-\!=\!5 \times 10^{-5}$, $\beta^+\!=\!1.5 \times 10^{-4}$) and the prior size (1.5B or 7B). SEM is shown as subscript.\\
$^{*}$: Pass@1 with $n=16$ for consistency, the official HuggingFace page reports pass@1(Avg@64);\\ $^\dagger$: Our models; $^\ddagger$: Models we trained using the L1-EXACT length penalty~\citep{aggarwal2025l1}.}
\label{tab:results_math_2}

\begin{tabularx}{\linewidth}{@{}l *{2}{Y}@{}}
\toprule
  \multicolumn{1}{c}{Model}
  & \multicolumn{1}{c}{Minerva}
  & \multicolumn{1}{c}{Olympiad} \\
  \multicolumn{1}{c}{}
  & \multicolumn{1}{c}{Acc.\,/\,Tok.}
  & \multicolumn{1}{c}{Acc.\,/\,Tok.} \\
\midrule
\multicolumn{3}{@{}l}{\textit{DeepScaleR-1.5B}} \\
\cmidrule{1-3}
  Baseline
    & $19.9_{\pm0.8}$\,/\,$6386_{\pm95}$
    & $54.8_{\pm0.4}$\,/\,$5922_{\pm29}$ \\
  L3L1-EXACT
    & $19.8_{\pm0.8}$\,/\,$2124_{\pm29}$
    & $\mathbf{\textcolor{DarkRed}{48.8}}_{\pm0.3}$\,/\,$\mathbf{2369}_{\pm6}$ \\
  CIB$^\dagger_{\beta^-\!,\, 1.5B}$
    & $\mathbf{20.4}_{\pm0.8}$\,/\,$4510_{\pm77}$
    & $\mathbf{55.5}_{\pm0.7}$\,/\,$4472_{\pm47}$ \\
  CIB$^\dagger_{\beta^+\!,\, 1.5B}$
    & $20.1_{\pm0.8}$\,/\,$4147_{\pm73}$
    & $55.3_{\pm0.7}$\,/\,$4211_{\pm45}$ \\
  CIB$^\dagger_{7B}$
    & $19.7_{\pm0.8}$\,/\,$\mathbf{2058}_{\pm68}$
    & $55.0_{\pm0.7}$\,/\,$3065_{\pm41}$ \\
\midrule
\multicolumn{3}{@{}l}{\textit{Nemotron-1.5B}} \\
\cmidrule{1-3}
  Baseline$^{*}$
    & $11.0_{\pm0.5}$\,/\,$13011_{\pm191}$
    & $43.8_{\pm0.3}$\,/\,$16270_{\pm89}$ \\
  L1-Exact$^\ddagger$
    & $11.4_{\pm0.4}$\,/\,$12925_{\pm147}$
    & $44.0_{\pm0.4}$\,/\,$15316_{\pm127}$ \\
  CIB$^\dagger_{\beta^-\!,\, 1.5B}$
    & $\mathbf{11.7}_{\pm0.6}$\,/\,$11118_{\pm189}$
    & $\mathbf{52.7}_{\pm0.6}$\,/\,$15089_{\pm176}$ \\
  CIB$^\dagger_{\beta^+\!,\, 1.5B}$
    & $10.9_{\pm0.6}$\,/\,$10051_{\pm168}$
    & $50.8_{\pm0.6}$\,/\,$13894_{\pm173}$ \\
  CIB$^\dagger_{7B}$
    & $10.6_{\pm0.5}$\,/\,$\mathbf{8752}_{\pm177}$
    & $51.1_{\pm0.6}$\,/\,$\mathbf{12242}_{\pm170}$ \\
\midrule
\multicolumn{3}{@{}l}{\textit{DLER-1.5B}} \\
\cmidrule{1-3}
  Baseline
    & $20.9_{\pm0.8}$\,/\,$2587_{\pm29}$
    & $54.1_{\pm0.4}$\,/\,$2766_{\pm10}$ \\
  L1-Exact$^\ddagger$
    & $21.2_{\pm0.8}$\,/\,$2277_{\pm26}$
    & $54.0_{\pm0.7}$\,/\,$2577_{\pm20}$ \\
  CIB$^\dagger_{\beta^-\!,\, 1.5B}$
    & $\mathbf{21.6}_{\pm0.8}$\,/\,$2346_{\pm27}$
    & $\mathbf{54.3}_{\pm0.7}$\,/\,$2616_{\pm18}$ \\
  CIB$^\dagger_{\beta^+\!,\, 1.5B}$
    & $21.0_{\pm0.8}$\,/\,$2305_{\pm27}$
    & $54.1_{\pm0.7}$\,/\,$2544_{\pm17}$ \\
  CIB$^\dagger_{7B}$
    & $20.4_{\pm0.9}$\,/\,$\mathbf{1327}_{\pm23}$
    & $51.6_{\pm0.7}$\,/\,$\mathbf{1631}_{\pm16}$ \\
\midrule
\multicolumn{3}{@{}l}{\textit{DLER-7B}} \\
\cmidrule{1-3}
  Baseline
    & $27.3_{\pm1.0}$\,/\,$1849_{\pm30}$
    & $\mathbf{65.0}_{\pm0.4}$\,/\,$2840_{\pm27}$ \\
  L1-Exact$^\ddagger$
    & $26.1_{\pm1.0}$\,/\,$1144_{\pm30}$
    & $61.9_{\pm0.7}$\,/\,$1872_{\pm26}$ \\
  CIB$^\dagger_{\beta^-, 7B}$
    & $\mathbf{27.6}_{\pm1.0}$\,/\,$1381_{\pm26}$
    & $64.1_{\pm0.7}$\,/\,$2361_{\pm39}$ \\
  CIB$^\dagger_{\beta^+, 7B}$
    & $25.8_{\pm1.0}$\,/\,$\mathbf{851}_{\pm19}$
    & $62.7_{\pm0.7}$\,/\,$\mathbf{1704}_{\pm28}$ \\
\bottomrule
\end{tabularx}

\end{table}

\begin{table}[!ht]
\centering

\caption{\textbf{Accuracy ($\uparrow$) and average CoT length ($\downarrow$) — Science, Knowledge \& Code Benchmarks.} Bold indicates best performance within each model group. \textbf{\textcolor{DarkRed}}{red} marks significant degradation. CIB subscripts denote the compression setting ($\beta^-\!=\!5 \times 10^{-5}$, $\beta^+\!=\!1.5 \times 10^{-4}$) and the prior size (1.5B or 7B). SEM is shown as subscript.\\
$^{*}$: Pass@1 with $n=16$ for consistency, the official HuggingFace page reports pass@1(Avg@64);\\ $^\dagger$: Our models; $^\ddagger$: Models we trained using the L1-EXACT length penalty~\citep{aggarwal2025l1}.}
\label{tab:res_beyond_math}

\begin{tabularx}{\linewidth}{@{}l *{3}{Y}@{}}
\toprule
  \multicolumn{1}{c}{Model}
  & \multicolumn{1}{c}{GPQA Diamond}
  & \multicolumn{1}{c}{MMLU-Pro}
  & \multicolumn{1}{c}{LCB} \\
  \multicolumn{1}{c}{}
  & \multicolumn{1}{c}{Acc.\,/\,Tok.}
  & \multicolumn{1}{c}{Acc.\,/\,Tok.}
  & \multicolumn{1}{c}{Acc.\,/\,Tok.} \\
\midrule
\multicolumn{4}{@{}l}{\textit{DeepScaleR-1.5B}} \\
\cmidrule{1-4}
  Baseline
    & $36.5_{\pm0.7}$\,/\,$7445_{\pm47}$
    & $39.4_{\pm0.2}$\,/\,$5285_{\pm16}$
    & $18.7_{\pm2.6}$\,/\,$11403_{\pm301}$ \\
  L3L1-EXACT
    & $37.9_{\pm0.7}$\,/\,$3125_{\pm16}$
    & $40.4_{\pm0.2}$\,/\,$2240_{\pm5}$
    & $17.5_{\pm2.5}$\,/\,$\mathbf{4612}_{\pm89}$ \\
  CIB$^\dagger_{\beta^-\!,\, 1.5B}$
    & $\mathbf{38.4}_{\pm0.6}$\,/\,$4571_{\pm98}$
    & $\mathbf{41.6}_{\pm0.2}$\,/\,$3286_{\pm14}$
    & $\mathbf{18.8}_{\pm1.2}$\,/\,$10410_{\pm151}$ \\
  CIB$^\dagger_{\beta^+\!,\, 1.5B}$
    & $35.5_{\pm0.6}$\,/\,$4429_{\pm87}$
    & $40.7_{\pm0.2}$\,/\,$3108_{\pm14}$
    & $18.6_{\pm1.1}$\,/\,$10274_{\pm134}$ \\
  CIB$^\dagger_{7B}$
    & $39.5_{\pm0.6}$\,/\,$\mathbf{2850}_{\pm87}$
    & $40.8_{\pm0.2}$\,/\,$\mathbf{2118}_{\pm13}$
    & $17.4_{\pm1.1}$\,/\,$9481_{\pm126}$ \\
\midrule
\multicolumn{4}{@{}l}{\textit{Nemotron-1.5B}} \\
\cmidrule{1-4}
  Baseline$^{*}$
    & $31.4_{\pm0.7}$\,/\,$21238_{\pm154}$
    & $41.4_{\pm0.2}$\,/\,$10490_{\pm39}$
    & $26.6_{\pm2.5}$\,/\,$24082_{\pm620}$ \\
  L1-Exact$^\ddagger$
    & $33.9_{\pm1.1}$\,/\,$21132_{\pm219}$
    & $39.6_{\pm0.1}$\,/\,$10416_{\pm30}$
    & $\mathbf{26.8}_{\pm1.3}$\,/\,$23995_{\pm310}$ \\
  CIB$^\dagger_{\beta^-\!,\, 1.5B}$
    & $\mathbf{34.9}_{\pm0.6}$\,/\,$22558_{\pm299}$
    & $39.4_{\pm0.2}$\,/\,$9851_{\pm41}$
    & $26.1_{\pm1.2}$\,/\,$23557_{\pm295}$ \\
  CIB$^\dagger_{\beta^+\!,\, 1.5B}$
    & $34.8_{\pm0.6}$\,/\,$19694_{\pm307}$
    & $39.6_{\pm0.2}$\,/\,$8908_{\pm38}$
    & $26.1_{\pm1.2}$\,/\,$22227_{\pm308}$ \\
  CIB$^\dagger_{7B}$
    & $33.6_{\pm0.7}$\,/\,$\mathbf{19533}_{\pm319}$
    & $\mathbf{41.6}_{\pm0.2}$\,/\,$\mathbf{8496}_{\pm39}$
    & $25.9_{\pm1.2}$\,/\,$\mathbf{21817}_{\pm304}$ \\
\midrule
\multicolumn{4}{@{}l}{\textit{DLER-1.5B}} \\
\cmidrule{1-4}
  Baseline
    & $\mathbf{40.9}_{\pm0.7}$\,/\,$3280_{\pm16}$
    & $41.7_{\pm0.2}$\,/\,$2575_{\pm12}$
    & $17.9_{\pm2.5}$\,/\,$4857_{\pm89}$ \\
  L1-Exact$^\ddagger$
    & $36.8_{\pm0.6}$\,/\,$3012_{\pm30}$
    & $40.8_{\pm0.2}$\,/\,$2326_{\pm11}$
    & $17.8_{\pm1.1}$\,/\,$4585_{\pm39}$ \\
  CIB$^\dagger_{\beta^-\!,\, 1.5B}$
    & $39.4_{\pm0.6}$\,/\,$3183_{\pm47}$
    & $\mathbf{41.8}_{\pm0.2}$\,/\,$2469_{\pm13}$
    & $\mathbf{18.2}_{\pm1.3}$\,/\,$4727_{\pm46}$ \\
  CIB$^\dagger_{\beta^+\!,\, 1.5B}$
    & $38.8_{\pm0.5}$\,/\,$3044_{\pm36}$
    & $41.6_{\pm0.2}$\,/\,$2353_{\pm12}$
    & $17.9_{\pm1.1}$\,/\,$4629_{\pm42}$ \\
  CIB$^\dagger_{7B}$
    & $36.7_{\pm0.6}$\,/\,$\mathbf{2423}_{\pm43}$
    & $40.4_{\pm0.2}$\,/\,$\mathbf{1973}_{\pm15}$
    & $16.8_{\pm1.1}$\,/\,$\mathbf{3977}_{\pm40}$ \\
\midrule
\multicolumn{4}{@{}l}{\textit{DLER-7B}} \\
\cmidrule{1-4}
  Baseline
    & $50.1_{\pm0.8}$\,/\,$3255_{\pm19}$
    & $\mathbf{58.7}_{\pm0.2}$\,/\,$2012_{\pm5}$
    & $\mathbf{35.9}_{\pm3.2}$\,/\,$4847_{\pm130}$ \\
  L1-Exact$^\ddagger$
    & $48.6_{\pm0.7}$\,/\,$2551_{\pm40}$
    & $56.9_{\pm0.2}$\,/\,$1446_{\pm6}$
    & $34.4_{\pm1.4}$\,/\,$4120_{\pm52}$ \\
  CIB$^\dagger_{\beta^-, 7B}$
    & $\mathbf{50.5}_{\pm0.8}$\,/\,$2899_{\pm33}$
    & $\mathbf{58.7}_{\pm0.2}$\,/\,$1718_{\pm5}$
    & $35.6_{\pm1.4}$\,/\,$4003_{\pm55}$ \\
  CIB$^\dagger_{\beta^+, 7B}$
    & $49.7_{\pm0.7}$\,/\,$\mathbf{2476}_{\pm32}$
    & $58.2_{\pm0.2}$\,/\,$\mathbf{1430}_{\pm6}$
    & $34.5_{\pm1.4}$\,/\,$\mathbf{3796}_{\pm61}$ \\
\bottomrule
\end{tabularx}

\end{table}

\paragraph{Domain generalization.}
Table~\ref{tab:res_beyond_math} confirms that \ac{CIB} generalizes beyond formal mathematical reasoning despite being trained exclusively on math data. On DeepScaleR, CIB$_{\beta^-}$ improves GPQA Diamond by $+1.9$ ($36.5_{\pm0.7} \!\to\! 38.4_{\pm0.6}$) and MMLU-Pro by $+2.2$ ($39.4_{\pm0.2} \!\to\! 41.6_{\pm0.2}$), both well outside overlapping error bars. On DLER-7B, CIB$_{\beta^-}$ matches or slightly improves all three out-of-domain benchmarks (GPQA $50.1 \!\to\! 50.5$; MMLU-Pro $58.7$, unchanged; LCB $35.9 \!\to\! 35.6$) while reducing tokens by $10.9\%$--$17.4\%$. LiveCodeBench accuracy remains within $\leq\!1$ point of the baseline for the $\beta^-$ and $\beta^+$ settings across all model families; only the most aggressive 7B-prior configurations occasionally exceed this margin (DeepScaleR $-1.3$; DLER-1.5B $-1.1$). Two model-specific patterns are also worth noting. On DLER-1.5B, the baseline holds the highest GPQA Diamond accuracy in its group ($40.9_{\pm0.7}$) and all compression methods reduce it, but CIB$_{\beta^-}$ degrades by only $1.5$ points ($39.4_{\pm0.6}$), whereas L1-Exact drops by $4.1$ points ($36.8_{\pm0.6}$), indicating that CIB better preserves domain-specific knowledge even when some regression is unavoidable. On Nemotron, the 1.5B-prior CIB variants reduce MMLU-Pro accuracy by up to $2.0$ points ($41.4 \!\to\! 39.4$), but the 7B prior fully recovers it ($41.6$, $+0.2$ vs.\ baseline) while still compressing tokens by $19.0\%$ ($\sim10.5\text{K} \!\to\!\,\  \sim8.5\text{K}$), suggesting that the stronger prior compensates for the capacity gap on knowledge-intensive tasks.

\clearpage

\subsection{Scaling Properties}\label{app:scaling_prop}
Figure~\ref{fig:itc_dler_8k}--~\ref{fig:itc_deepscaler} show the Inference Time Compute (ITC) behavior of our \ac{CIB} models compared to the baselines on two math reasoning benchmarks, namely, AIME24 and AIME25. Notably, \ac{CIB}-compressed models exhibit on par or better scaling behavior than baselines. Especially, when bounding the maximum generation length to 3K, Figure~\ref{fig:itc_dler_3k}, or 2K, Figure~\ref{fig:itc_dler_2k}, we see that \ac{CIB} trained model with the large prior ($Q_\phi=7B$) achieves superior scaling performance.

\begin{figure}[!ht]
    \centering
    \includegraphics[width=0.5\linewidth]{figures/inference_time_compute_dler_1.5b_max_tok_8000.pdf}
    \caption{\textbf{DLER --- Inference Time Compute.} Pass@k accuracy for different values of $k$, with a maximum completion length of 8K tokens.}
    \label{fig:itc_dler_8k}
\end{figure}

\begin{figure}[!ht]
    \centering
    \includegraphics[width=0.5\linewidth]{figures/inference_time_compute_dler_1.5b_max_tok_3000.pdf}
    \caption{\textbf{DLER --- Inference Time Compute.} Pass@k accuracy for different values of $k$, with a maximum completion length of 3K tokens.}
    \label{fig:itc_dler_3k}
\end{figure}

\begin{figure}[!ht]
    \centering
    \includegraphics[width=0.5\linewidth]{figures/inference_time_compute_dler_1.5b_max_tok_2000.pdf}
    \caption{\textbf{DLER --- Inference Time Compute.} Pass@k accuracy for different values of $k$, with a maximum completion length of 2K tokens.}
    \label{fig:itc_dler_2k}
\end{figure}

\begin{figure}[!ht]
    \centering
    \includegraphics[width=0.5\linewidth]{figures/inference_time_compute_deepscaler.pdf}
    \caption{\textbf{DeepScaler --- Inference Time Compute.} Pass@k accuracy for different values of $k$, with a maximum completion length of 8K tokens. }
    \label{fig:itc_deepscaler}
\end{figure}

\clearpage
\section{CoT Qualitative Comparison: Pruning Reasoning Redundancy} \label{app:qualitative_analysis}

To examine the nature of the compression induced by our semantic prior, we visualize qualitative differences between baseline traces and \ac{CIB}-generated traces across a range of reasoning tasks (see Figures \ref{fig:cot_1}--\ref{fig:cot_3}). We find that the semantic prior does not merely truncate outputs; instead, it changes \emph{which} computations are expressed in the trace. Concretely, by imposing an information-cost on the reasoning tokens (via the prior surprisal) while preserving task success through the sufficiency objective, \ac{CIB} penalizes computation that offers low marginal information regarding the target $Y$. This effect manifests through three recurring mechanisms:

\begin{itemize}

    \item Induction of Algorithmic Generalization.\\ 
    Perhaps most notably, the information bottleneck biases the model toward theoretically superior solution paths. In geometric reasoning (Figure~\ref{fig:cot_1}), while the baseline defaults to brute-force coordinate calculations via the Pythagorean theorem, the \ac{CIB} model converges on a concise trigonometric identity ($\sin T = \cos R$). This suggests that minimizing the redundant computation under the semantic prior of the reasoning trace naturally selects for abstract, elegant proofs, as these represent the most compressed description of the transformation from prompt $X$ to answer $Y$.
    
    \item Suppression of Stochastic Exploration and Verification Bloat.\\ 
    Baseline models typically adopt a high-entropy strategy characterized by reasoning redundancy, utilizing conversational scaffolding and unstructured exploration. For instance, in arithmetic search tasks (Figure~\ref{fig:cot_2}), the baseline explicitly calculates invalid candidates (e.g., $98^3$) before testing the correct one. Similarly, in constraint satisfaction problems (Figure~\ref{fig:cot_3}), the baseline engages in tautological self-checks (e.g., verifying that positive lengths satisfy $x>0$). \ac{CIB} eliminates these low-utility branches. By penalizing the cumulative surprisal of the chain, the policy shifts from ``exploratory thinking" to ``efficient execution," treating valid derivations as terminal states rather than triggering redundant self-doubt loops.

    \item Semantic Filtering  of Syntactic Artifacts.\\ 
    \ac{CIB} acts as a semantic filter that separates essential state information from syntactic artifacts. As shown in Figure~\ref{fig:cot_1}, when presented with raw code metadata (Asymptote), the baseline expends significant budget on ``verbal parsing"—reading the code aloud without progressing the state. The compressed policy bypasses this verbalization, extracting the underlying geometric conditions directly.

\end{itemize}

\begin{figure*}[!ht]
    \centering
    
    \begin{tcolorbox}[colback=white, colframe=black, title=\textbf{Input Prompt}]
    \textbf{Problem:} In $\triangle{RST}$, shown, $\sin{R}=\frac{2}{5}$. What is $\sin{T}$? \\
    \textit{[Asymptote Code Segment]:} \texttt{pair R,S,T; T = (0,0); S = (2,0); R = (2,sqrt(21)); ...}
    \end{tcolorbox}
    \vspace{0.1cm}
    
    \begin{tcolorbox}[colback=gray!5, colframe=gray!40, title=\textbf{Baseline Response (Standard CoT) - 1685 tokens}]
    \bloat{Okay, so I have this triangle RST... The Asymptote code draws a right-angled triangle...} \bloat{The Asymptote code defines points T at (0,0)...} \bridge{Triangle RST is right-angled at S.} 
    \bloat{So, angle at S is 90 degrees... Wait, the label is on (R+T)/2...} \bridge{RT is the hypotenuse, length 5.}
    
    \bloat{Let me assign the sides... Given sin R = 2/5...} \bridge{sin R = ST / RT = 2/5. RT is 5, so ST = 2.}
    \bloat{Now, since it's a right-angled triangle at S, we can use Pythagoras...} \bridge{RS$^2$ + 2$^2$ = 5$^2$... RS = $\sqrt{21}$.}
    
    \bloat{Now, we need to find sin T... Wait, no. Wait, angle T is at point T... Wait, maybe I...}
    \bridge{sin T = opposite / hypotenuse... RS / RT.} \bridge{sin T = $\sqrt{21}$/5.}
    \end{tcolorbox}
    \vspace{0.1cm} 
    
    \begin{tcolorbox}[colback=blue!5, colframe=blue!40, title=\textbf{Ours: \ac{CIB} - 154 tokens}]
    First, observe that \bridge{triangle RST is a right-angled triangle} at point S. Given that \bridge{sin R = 2/5.}
    
    In a right-angled triangle, the sum of angles R and T is 90 degrees, so angle T is complementary to angle R. Using the sine of complementary angles, \bridge{sin T = cos R.}
    
    Since sin R = 2/5, cos R = $\sqrt{1 - (2/5)^2} = \sqrt{21/25} = \sqrt{21}/5$.
    Therefore, \bridge{sin T = $\sqrt{21}$/5.}
    \end{tcolorbox}

    \caption{\textbf{Qualitative comparison on geometry reasoning.} \textbf{Top:} Prompt. \textbf{Middle:} the Baseline trace  is dominated by redundant ``verbal parsing" of the input code and repetitive self-correction loops (highlighted in red). \textbf{Bottom:} The CIB trace successfully filters this syntactic noise. Notably, the information constraint induces a shift in strategy: CIB bypasses the lengthy coordinate calculation favored by the baseline, converging instead on a concise trigonometric identity.}
    \label{fig:cot_1}
\end{figure*}

\begin{figure*}[t]
    \centering
    
    \begin{tcolorbox}[colback=white, colframe=black, title=\textbf{Input Prompt}]
    \textbf{Problem:} When the two-digit integer $``\text{AB}"$ is cubed, the value is $912,673$. What is $A + B$?
    \end{tcolorbox}
    \vspace{0.1cm}
    
    \begin{tcolorbox}[colback=gray!5, colframe=gray!40, title=\textbf{Baseline Response (Standard CoT) - 965 tokens}]
    \bloat{Okay, so I have this problem... I need to find A + B... Hmm... Wait, let me think.} \bridge{AB is a two-digit integer... between 10 and 99.} 
    
    \bloat{First, maybe I should find the cube root... I know that 100$^3$ is 1,000,000... Maybe 99$^3$ is about 970299... 98$^3$ is 98 * 98 * 98. Let me calculate that.}
    \bloat{9604 * 98... 960400 - 19208 = 941192. So 98$^3$ is 941192, which is larger than 912673.}
    
    \bloat{What about 97$^3$? Let me compute that.} \bridge{97 * 97 is 9409... 9409 * 97 = 912,673.} \bloat{Oh, wait! 97 cubed is 912673. So AB is 97.}
    
    \bloat{Wait wait, let me double-check. 97 cubed is 912673... Yes, correct.}
    \bridge{So AB is 97, so A is 9, B is 7.} \bridge{Therefore, A + B is 9 + 7 = 16.}
    \end{tcolorbox}
    \vspace{0.1cm} 
    
    \begin{tcolorbox}[colback=blue!5, colframe=blue!40, title=\textbf{Ours: \ac{CIB} - 210 tokens}]
    To determine the value of A + B, we start by noting that \bridge{the two-digit integer AB is a number between 10 and 99.} When AB is cubed, the result is 912,673.
    
    Taking the cube root of 912,673 gives an approximate value. The cube root of 912,673 is approximately 97, since \bridge{97$^3$ = 912,673.}
    
    Thus, \bridge{AB is 97.} The digits A and B are 9 and 7, respectively. Adding them together, \bridge{A + B equals 9 + 7 = 16.}
    \end{tcolorbox}

    \caption{\textbf{Qualitative comparison on arithmetic search.} \textbf{Top:} Prompt. \textbf{Middle:} Baseline model engages in inefficient trial-and-error, explicitly calculating the incorrect candidate $98^3$ (highlighted in red) and engaging in redundant self-verification loops. \textbf{Bottom:} the \ac{CIB} model (bottom) suppresses this exploratory computation, converging directly on the correct candidate ($97$) and reducing the token count by $\sim$78\% without loss of accuracy.}
    \label{fig:cot_2}
\end{figure*}

\begin{figure*}[t]
    \centering
    
    \begin{tcolorbox}[colback=white, colframe=black, title=\textbf{Input Prompt}]
    \textbf{Problem:} Two sides of a triangle are each $8$ units long. If the third side has a whole number length, what is the greatest possible perimeter, in units, for the triangle?
    \end{tcolorbox}
    \vspace{0.1cm}
    
    \begin{tcolorbox}[colback=gray!5, colframe=gray!40, title=\textbf{Baseline Response (Standard CoT)}]
    \bloat{Okay, so I have this problem... Hmm, let me think about how to approach this.} \bridge{Triangle inequality theorem... sum of any two sides $>$ third.}
    
    \bridge{$8 + 8 > c$. That means $16 > c... c$ can be at most 15.}
    
    \bloat{But wait, wait. Let me think again... The triangle inequality also requires... $8 + c > 8$. Well, that simplifies to $c > 0$, which is always true...}
    
    \bloat{But wait, let me double-check. Let's plug c = 15 into the triangle inequalities... 1. $8 + 8 > 15$? Yes... 2. $8 + 15 > 8$? Yes... So all the inequalities are satisfied.}
    
    \bloat{Wait, but hold on. Let me make sure I didn't miss any constraints... c can be 15...}
    \bridge{Perimeter is 8 + 8 + 15 = 31 units.}
    \end{tcolorbox}
    \vspace{0.1cm} 
    
    \begin{tcolorbox}[colback=blue!5, colframe=blue!40, title=\textbf{Ours: \ac{CIB}}]
    To determine the greatest possible perimeter... we apply the \bridge{triangle inequality theorem.} The sum of any two sides must exceed the third side. Let the third side be `$x$'.
    
    The inequalities are:
    1. \bridge{$8 + 8 > x \rightarrow x < 16$}
    2. $8 + x > 8 \rightarrow x > 0$ (always true)
    
    Thus, x must be less than 16 and a positive integer. \bridge{The maximum integer less than 16 is 15.}
    Therefore, the \bridge{maximum perimeter is 8 + 8 + 15 = 31.}
    \end{tcolorbox}

    \caption{\textbf{Qualitative comparison on constraint satisfaction.}\textbf{Top:} Prompt. \textbf{Middle:} Baseline trace is characterized by ``verification bloat." Despite correctly deriving the constraint ($c < 16$) early on, the model expends tokens checking tautologies ($8+c > 8$) and re-verifying its own conclusions (highlighted in red). \textbf{Bottom:} \ac{CIB} trace (bottom) retains the necessary constraint logic but eliminates the redundant self-auditing loops, trusting the derivation immediately.}
    \label{fig:cot_3}
\end{figure*}
 
\clearpage
\section{Training Details} \label{app:tr_details}
To ensure reproducibility, we provide the full set of hyperparameters and infrastructure details used in our experiments. Our implementation relies on the \texttt{trl} library (version 0.26.2) for Group Relative Policy Optimization (GRPO) and \texttt{lighteval} (version 0.8.1) for robust evaluation.

\subsection{Hyperparameters}
We fine-tuned all models using the hyperparameters listed in Table~\ref{tab:hyperparams}.
\begin{table}[!ht]
\centering
\caption{\textbf{GRPO Training Hyperparameters.} All experiments share these settings unless otherwise noted.}\label{tab:hyperparams}
\begin{tabular}{l|c}
\toprule
\textbf{Hyperparameter} & \textbf{Value} \\ 
\midrule
Optimizer & AdamW \\
Learning Rate & $1 \times 10^{-6}$ \\
LR Scheduler & constant \\
Warmup Ratio & 0.03 \\
Batch Size (Global) & 128 \\
Number of Generations per Prompt & 16 \\
Temperature & 0.8 \\
Max Completion Length & 4096 for DLER and 8192 for Deepscaler  \\
Max Gradient Norm & 1.0 \\
KL Penalty Coefficient ($\beta_{\text{KL}}$) & $5.e^{-4}$ \\
CIB Regularization Weight ($\beta^-, \beta^+$) & $\{5.e^{-5}, 1.5e^{-4}\}$ \\ 
Number of steps & 150 \\ 
\bottomrule
\end{tabular}
\end{table}

\subsection{Evaluation Setup}
We utilized \texttt{lighteval} for all downstream benchmarks.
\begin{itemize}
    \item \textbf{Inference Engine:} vLLM (version 0.10.2).
    \item \textbf{Sampling Strategy:} We used temperature $T=0.6$, top$_p=0.95$, 32K max completion length, and 16 generations per prompt.
    \item \textbf{Hardware:} Training was performed on a node with $8 \times$ NVIDIA H100 (80GB) GPUs.
\end{itemize}

\section{Results from literature}
\label{app:literature_results}
We report additional results from~\cite{wu2025lapo} in Table~\ref{tab:literature_results}.

\begin{table*}[!ht]
\centering

\caption{Compression results for DeepScaler-1.5B~\cite{wu2025lapo}.}
\label{tab:literature_results}

\begin{tabular}{lcc|cc|cc|cc}
\toprule
  \multicolumn{1}{c|}{Model}
 & \multicolumn{2}{c|}{MATH500} 
 & \multicolumn{2}{c|}{AIME24} 
 & \multicolumn{2}{c|}{AMC-23} 
 & \multicolumn{2}{c}{OlympiadBench} \\
\midrule 
 & Acc.$\uparrow$ & Tok.$\downarrow$
 & Acc.$\uparrow$ & Tok.$\downarrow$
 & Acc.$\uparrow$ & Tok.$\downarrow$
 & Acc.$\uparrow$ & Tok.$\downarrow$\\
\midrule
DeepScaler-1.5B  & 85.8 & 3280 & 35.5 & 9246 & 74.2 & 6416 & 54.6 & 5974 \\ \midrule
ThinkPrune-I2k  & 85.5 & 1707 & 34.9 & 5095 & 74.3 & 2913 & 54.7 & 3498 \\
ThinkPrune-4k   & 86.6 & 2042 & 35.5 & 6488 & 76.3 & 3839 & 55.7 & 4010 \\
HAPO            & 84.4 & 2370 & 31.4 & 7702 & 70.3 & 4301 & 51.4 & 4571 \\
AutoThink       & 84.9 & 1635 & 36.2 & 7201 & 67.8 & 3658 & 52.5 & 4085 \\
Thinkless       & 81.3 & 2944 & 28.9 & 9143 & 65.7 & 5276 & 50.2 & 6057 \\
LAPO-D   & 86.4 & 2365 & 37.6 & 5945 & 77.6 & 3655 & 56.1 & 4499 \\
LAPO-I   & 86.3 & 2168 & 38.1 & 5371 & 78.3 & 3765 & 56.3 & 4024 \\
\bottomrule
\end{tabular}

\end{table*}
\section{The list of datasets and models}
\label{app:assets}

\begin{table}[ht]
  \centering
  \caption{The list of datasets.}
  \label{tab:datasets}
  \begin{tabular}{lcc}
    \toprule
    \textbf{Dataset} & \textbf{Link} & \textbf{License} \\
    \midrule
    MATH500~\citep{lightman2023lets}
      & \href{https://huggingface.co/datasets/math-ai/math500}{\texttt{HuggingFace}}
      & MIT \\[2pt]
    AIME 2024~\citep{aime2024}
      & \href{https://huggingface.co/datasets/HuggingFaceH4/aime_2024}{\texttt{HuggingFace}}
      & Copyright MAA \\[2pt]
    AIME 2025~\citep{aime2025}
      & \href{https://huggingface.co/datasets/yentinglin/aime_2025}{\texttt{HuggingFace}}
      & Copyright MAA \\[2pt]
    Minerva Math~\citep{lewkowycz2022solving}
      & \href{https://huggingface.co/datasets/math-ai/minervamath}{\texttt{HuggingFace}}
      & MIT \\[2pt]
    OlympiadBench~\citep{he2024olympiadbench}
      & \href{https://huggingface.co/datasets/math-ai/olympiadbench}{\texttt{HuggingFace}}
      & MIT \\[2pt]
    AMC 2023~\citep{amc2023}
      & \href{https://huggingface.co/datasets/math-ai/amc23}{\texttt{HuggingFace}}
      & Copyright MAA \\[2pt]
    GPQA-Diamond~\citep{rein2023gpqa}
      & \href{https://huggingface.co/datasets/Idavidrein/gpqa}{\texttt{HuggingFace}}
      & CC BY 4.0 \\[2pt]
    MMLU-Pro~\citep{wang2024mmlu}
      & \href{https://huggingface.co/datasets/TIGER-Lab/MMLU-Pro}{\texttt{HuggingFace}}
      & MIT \\[2pt]
    LiveCodeBench~\citep{jain2024livecodebench}
      & \href{https://github.com/LiveCodeBench/LiveCodeBench}{\texttt{GitHub}}
      & MIT \\[2pt]
    DeepScaleR dataset~\citep{luo2025deepscalerdataset}
      & \href{https://huggingface.co/datasets/agentica-org/DeepScaleR-Preview-Dataset}{\texttt{HuggingFace}}
      & MIT \\
    \bottomrule
  \end{tabular}
\end{table}

\begin{table}[ht]
  \centering
  \caption{The list of models.}
  \label{tab:models}
  \begin{tabular}{lcc}
    \toprule
    \textbf{Model} & \textbf{Link} & \textbf{License} \\
    \midrule
    DLER-R1-1.5B~\citep{liu2025dler}
      & \href{https://huggingface.co/nvidia/DLER-R1-1.5B-Research}{\texttt{HuggingFace}}
      & NSCLv1 (research only) \\[2pt]
    DLER-R1-7B~\citep{liu2025dler}
      & \href{https://huggingface.co/nvidia/DLER-R1-7B-Research}{\texttt{HuggingFace}}
      & NSCLv1 (research only) \\[2pt]
    DeepScaleR-1.5B-Preview~\citep{luo2025deepscalerdataset}
      & \href{https://huggingface.co/agentica-org/DeepScaleR-1.5B-Preview}{\texttt{HuggingFace}}
      & MIT \\[2pt]
    Qwen2.5-Base-0.5B~\citep{yang2024qwen2}
      & \href{https://huggingface.co/Qwen/Qwen2.5-0.5B}{\texttt{HuggingFace}}
      & Apache 2.0 \\[2pt]
    Qwen2.5-Base-1.5B~\citep{yang2024qwen2}
      & \href{https://huggingface.co/Qwen/Qwen2.5-1.5B}{\texttt{HuggingFace}}
      & Apache 2.0 \\[2pt]
    Qwen2.5-Base-3B~\citep{yang2024qwen2}
      & \href{https://huggingface.co/Qwen/Qwen2.5-3B}{\texttt{HuggingFace}}
      & Qwen Research \\[2pt]
    Qwen2.5-Base-7B~\citep{yang2024qwen2}
      & \href{https://huggingface.co/Qwen/Qwen2.5-7B}{\texttt{HuggingFace}}
      & Apache 2.0 \\[2pt]
    L3L1-1.5B-Exact~\citep{aggarwal2025l1}
      & \href{https://huggingface.co/l3lab/L1-Qwen-1.5B-Exact}{\texttt{HuggingFace}}
      & MIT \\[2pt]

    OpenReasoning-Nemotron-1.5B~\citep{ahmad2025opencodereasoning,moshkov2025aimo}
      & \href{https://huggingface.co/nvidia/OpenReasoning-Nemotron-1.5B}{\texttt{HuggingFace}}
      & CC-BY-4.0 \\
    \bottomrule
  \end{tabular}
\end{table}

\clearpage



\end{document}